\documentclass{article} %
\usepackage{iclr2026_conference,times}

\usepackage{hyperref}
\usepackage{url}

\usepackage[utf8]{inputenc} %

\usepackage{graphicx}

\usepackage[T1]{fontenc}

\IfFileExists{headers/config/showoverfull.config}{
	\overfullrule=1cm
}{
}

\usepackage{marginnote}

\usepackage[backgroundcolor=none,linecolor=red,textsize=footnotesize]{todonotes}

\usepackage{etoolbox}

\newbool{includeappendix}
\setbool{includeappendix}{true} %
\IfFileExists{headers/config/noappendix.config}{
	\setbool{includeappendix}{false}
}{}

\newif\ifincludeappendixx
\ifbool{includeappendix}{
	\includeappendixxtrue
}{
	\includeappendixxfalse
}

\usepackage{xr} %
\usepackage{filecontents}

\ifbool{includeappendix}{}{
	\input{appendix-labels-loader}

	\externaldocument{appendix-labels}
}

\definecolor{my-full-blue}{HTML}{1F77B4}

\definecolor{my-full-orange}{HTML}{FF7F0E}

\definecolor{my-full-green}{HTML}{2CA02C}

\definecolor{my-full-red}{HTML}{d62728}

\definecolor{my-full-purple}{HTML}{9467bd}

\definecolor{my-full-brown}{HTML}{8c564b}

\definecolor{my-full-pink}{HTML}{e377c2}

\definecolor{my-full-gray}{HTML}{7f7f7f}

\definecolor{my-full-olive}{HTML}{bcbd22}

\definecolor{my-full-cyan}{HTML}{17becf}

\definecolor{muted-green}{RGB}{87,132,89}
\definecolor{muted-red}{RGB}{218,90,84}

\colorlet{cgreen}{muted-green}
\colorlet{cred}{muted-red}

\colorlet{my-blue}{my-full-blue!30}
\colorlet{my-orange}{my-full-orange!30}
\colorlet{my-green}{my-full-green!30}
\colorlet{my-red}{my-full-red!30}
\colorlet{my-purple}{my-full-purple!30}
\colorlet{my-brown}{my-full-brown!30}
\colorlet{my-pink}{my-full-pink!30}
\colorlet{my-gray}{my-full-gray!30}
\colorlet{my-olive}{my-full-olive!30}
\colorlet{my-cyan}{my-full-cyan!30}

\usepackage{listings}

\usepackage{textcomp}

\usepackage{xcolor}

\usepackage[scaled=0.8]{beramono}

\definecolor{ckeyword}{HTML}{7F0055}
\definecolor{ccomment}{HTML}{3F7F5F}
\definecolor{cstring}{HTML}{2A0099}

\lstdefinestyle{numbers}{
	numbers=left,
	framexleftmargin=20pt,
	numberstyle=\tiny,
	firstnumber=auto,
	numbersep=1em,
	xleftmargin=2em
}

\lstdefinestyle{layout}{
	frame=none,
	captionpos=b,
}

\lstdefinestyle{comment-style}{
	morecomment=[l]//,
	morecomment=[s]{/*}{*/},
	commentstyle={\color{ccomment}\itshape},
}

\lstdefinestyle{string-style}{
	morestring=[b]",%
	morestring=[b]',%
	stringstyle={\color{cstring}},
	showstringspaces=false,%
}

\lstdefinestyle{keyword-style}{
	keywordstyle={\ttfamily\bfseries},
	morekeywords={
		function,
		constructor,
		int,
		bool,
		return,
		returns,
		uint
	},
	morekeywords = [2]{},
	keywordstyle = [2]{\text},
	sensitive=true,
}

\lstdefinestyle{input-encoding}{
	inputencoding=utf8,
	extendedchars=true,
	literate=
	{ℝ}{$\reals$}1%
	{→}{$\rightarrow$}1%
	{α}{$\alpha$}1%
	{β}{$\beta$}1%
	{λ}{$\lambda$}1%
	{θ}{$\theta$}1%
	{ϕ}{$\phi$}1%
}

\lstdefinestyle{escaping}{
	moredelim={**[is][\color{blue}]{\%}{\%}},
	escapechar=|,
	mathescape=true
}

\lstdefinestyle{default-style}{
	basicstyle=\fontencoding{T1}\ttfamily\footnotesize,
	style=numbers,
	style=layout,
	style=comment-style,
	style=string-style,
	style=keyword-style,
	style=input-encoding,
	style=escaping,
	tabsize=2,
	upquote=true
}

\lstdefinelanguage{BASIC}{
	language=C++,
	style=default-style
}[keywords,comments,strings]%

\lstset{language=BASIC}

\usepackage{amsmath,amsfonts,bm}

\def\eqref#1{equation~\ref{#1}}

\def\1{\bm{1}}

\def\vx{{\bm{x}}}

\DeclareMathAlphabet{\mathsfit}{\encodingdefault}{\sfdefault}{m}{sl}
\SetMathAlphabet{\mathsfit}{bold}{\encodingdefault}{\sfdefault}{bx}{n}

\def\gH{{\mathcal{H}}}

\def\gN{{\mathcal{N}}}

\def\gX{{\mathcal{X}}}
\def\gY{{\mathcal{Y}}}

\def\sB{{\mathbb{B}}}

\newcommand{\E}{\mathbb{E}}
\newcommand{\Prob}{\mathbb{P}}

\newcommand{\R}{\mathbb{R}}

\DeclareMathOperator*{\argmax}{arg\,max}

\usepackage{adjustbox}
\usepackage[capitalise]{cleveref}
\usepackage{scrextend}

\usepackage{enumitem}
\usepackage{threeparttable}
\usepackage{multirow, makecell}
\usepackage{booktabs}
\usepackage{xspace}
\usepackage{wrapfig}
\usepackage{dsfont}

\usepackage{subcaption}
\usepackage{caption}
\captionsetup[figure]{font=footnotesize}

\usepackage{algorithm,algcompatible,amssymb,amsmath}

\algnewcommand\RETURN{\State \textbf{return} }
\usepackage{amsthm}
\usepackage{thmtools}
\usepackage{thm-restate}
\usepackage{mathtools}
\usepackage[makeroom]{cancel}

\declaretheoremstyle[
  spaceabove=0.5em, %
  spacebelow=0.01em,%
  headfont=\normalfont\bfseries,
  bodyfont=\normalfont,
  postheadspace=0.5em,
]{mystyle}

\declaretheoremstyle[
  spaceabove=0.5em, %
  spacebelow=0.01em,%
  headfont=\normalfont\itshape,
  bodyfont=\normalfont,
  postheadspace=0.5em,
  qed=$\square$,
]{myproofstyle}

\crefname{thm}{Theorem}{Theorems}

\crefname{lem}{Lemma}{Lemmas}

\crefname{cor}{Corollary}{Corollaries}

\crefname{defi}{Definition}{Definitions}

\usepackage{tikz}
\usetikzlibrary{calc,decorations,decorations.pathmorphing}
\usetikzlibrary{positioning,fit,arrows}
\usetikzlibrary{decorations.markings}
\usetikzlibrary{shapes,shapes.geometric}
\usetikzlibrary{shadows,patterns,snakes}
\usetikzlibrary{backgrounds,decorations.pathreplacing,automata}
\usetikzlibrary{intersections}
\usetikzlibrary{angles,quotes}
\usetikzlibrary{plotmarks}
\usetikzlibrary{patterns}
\usetikzlibrary{arrows.meta}
\usetikzlibrary{shapes.misc}
\usetikzlibrary{chains}
\usepackage{siunitx}
\newcolumntype{d}[1]{S[table-format=#1]}
\usetikzlibrary{patterns.meta}

\makeatletter
\def\extractcoord#1#2#3{
	\path let \p1=(#3) in \pgfextra{
		\pgfmathsetmacro#1{\x{1}/\pgf@xx}
		\pgfmathsetmacro#2{\y{1}/\pgf@yy}
		\xdef#1{#1} \xdef#2{#2}
	};
}
\makeatother

\usepackage{pifont}%

\usepackage{comment}

\newcommand{\cifar}{CIFAR-10\xspace}
\newcommand{\IN}{\textsc{ImageNet}\xspace}
\newcommand{\predict}{\textsc{Predict}\xspace}
\newcommand{\certify}{\textsc{Certify}\xspace}

\makeatletter
\makeatother

\crefformat{section}{\S#2#1#3}

\crefrangeformat{section}{\S#3#1#4\crefrangeconjunction\S#5#2#6}

\crefmultiformat{section}{\S#2#1#3}{\crefpairconjunction\S#2#1#3}{\crefmiddleconjunction\S#2#1#3}{\creflastconjunction\S#2#1#3}

\newcommand{\crefrangeconjunction}{--}

\crefname{listing}{Lst.}{listings}
\crefname{line}{Lin.}{Lin.}
\crefname{appendix}{App.}{App.}

\newcommand{\appref}[1]{%
	\ifbool{includeappendix}{\cref{#1}}{the appendix}%
}
\newcommand{\Appref}[1]{%
	\ifbool{includeappendix}{\cref{#1}}{The appendix}%
}

\title{Dual Randomized Smoothing: Beyond Global Noise Variance}

\author{
Chenhao Sun, \, Yuhao Mao, \, Martin Vechev \\
\, Department of Computer Science, ETH Z\"urich, Switzerland \\
\, \{chenhao.sun, yuhao.mao, martin.vechev\}@inf.ethz.ch
}

\iclrfinalcopy %
\begin{document}

\maketitle
\begin{abstract}
   Randomized Smoothing (RS) is a prominent technique for certifying the robustness of neural networks against adversarial perturbations. With RS, achieving high accuracy at small radii requires a small noise variance, while achieving high accuracy at large radii requires a large noise variance. However, the global noise variance used in the standard RS formulation leads to a fundamental limitation: there exists no global noise variance that simultaneously achieves strong performance at both small and large radii. To break through the global variance limitation, we propose a dual RS framework which enables input-dependent noise variances. To achieve that, we first prove that RS remains valid with input-dependent noise variances, provided the variance is locally constant around each input. Building on this result, we introduce two components which form our dual RS framework: (i) a variance estimator first predicts an optimal noise variance for each input, (ii) this estimated variance is then used by a standard RS classifier. The variance estimator is independently smoothed via RS to ensure local constancy, enabling flexible design. We also introduce efficient training strategies to iteratively optimize the two components involved in the framework. Extensive experiments on the CIFAR-10 dataset demonstrate that our dual RS method provides strong performance for both small and large radii—unattainable with global noise variance—while incurring only a 60\% computational overhead at inference. Moreover, it consistently outperforms prior input-dependent noise approaches across most radii, with particularly large gains at radii 0.5, 0.75, and 1.0, achieving relative improvements of 15.6\%, 20.0\%, and 15.7\%, respectively. On \IN, dual RS remains effective across all radii, with 8.6\%, 17.1\% and 9.1\% performance advantages at radii 0.5, 1.0 and 1.5 respectively. Additionally, the proposed dual RS framework naturally provides a routing perspective for certified robustness, improving the accuracy-robustness trade-off with off-the-shelf expert RS models. Our code is available at \url{https://github.com/eth-sri/Dual-Randomized-Smoothing}.

\end{abstract}

\section{Introduction}

Deep neural networks have achieved remarkable success across diverse tasks but remain highly vulnerable to adversarial attacks; small, carefully crafted perturbations can lead to incorrect or unexpected predictions. This vulnerability has made adversarial robustness, which ensures consistent model outputs under small perturbations, a critical research focus. As heuristic defenses are often unreliable \citep{athalye2018obfuscated, Croce020a}, methods with provable robustness guarantees have become increasingly important.

Randomized Smoothing (RS) is a prominent technique for certifying robustness against $\ell_2$-norm adversarial perturbations. It constructs a smoothed classifier by adding Gaussian noise to the input and taking the majority vote of predictions, thereby ensuring consistent outputs within a certified neighborhood. Prior work has primarily focused on two directions: (1) training-based RS, which improves robustness by explicitly training the base classifier on noisy inputs \citep{CohenRK19, SalmanLRZZBY19, JeongS20, ZhaiDHZGRH020, JeongPKLKS21, JeongKS23}, and (2) denoised smoothing, where noisy inputs are first denoised before classification \citep{SalmanSYKK20, carlinicertified}. Recent advances in deep learning, particularly diffusion models, have significantly enhanced denoised smoothing approaches, enabling state-of-the-art certified accuracy at small perturbation radii \citep{carlinicertified, xiao2023densepure, zhang2023diffsmooth}.

\begin{figure*}[]
    \centering
    \begin{subfigure}[t]{.4\linewidth}
        \centering
        \includegraphics[width=.98\linewidth]{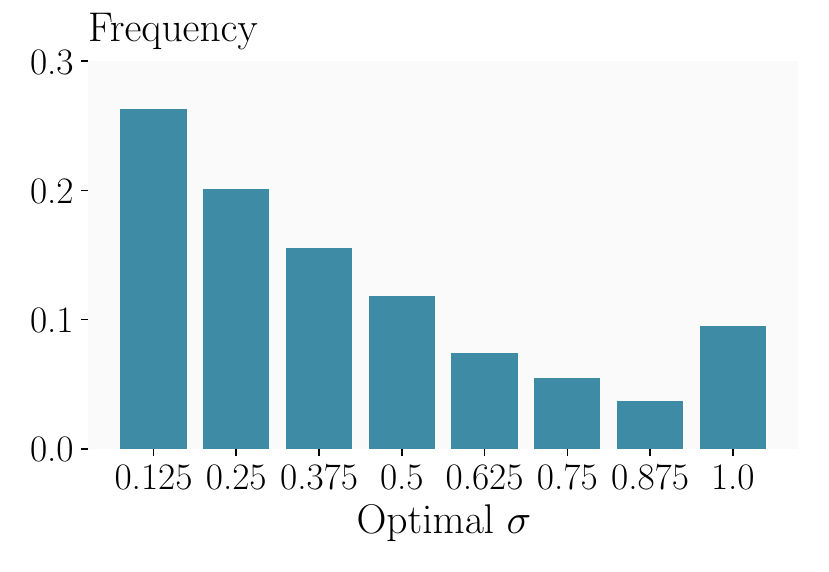}
    \end{subfigure}
    \begin{subfigure}[t]{.4\linewidth}
        \centering
        \includegraphics[width=.98\linewidth]{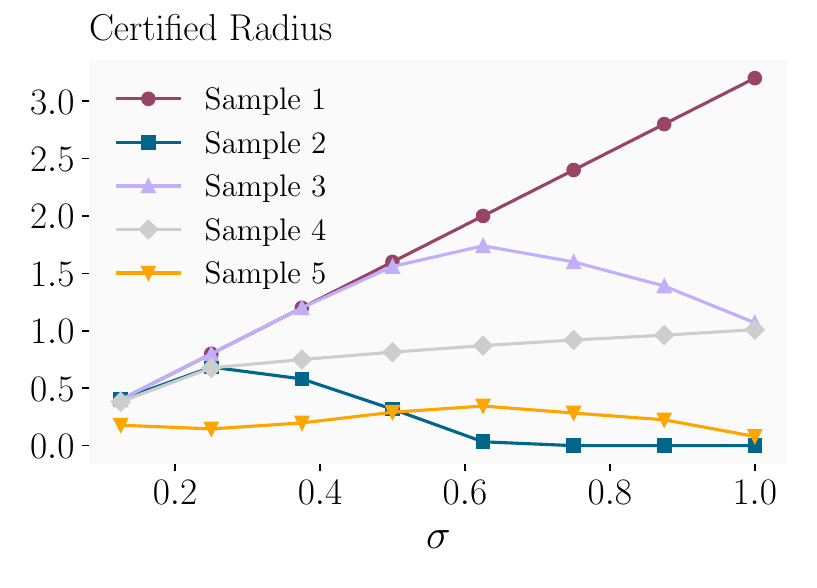}
    \end{subfigure}
    \vspace{-4mm}
    \caption{Left: The distribution of the optimal $\sigma$ on \cifar test set, where the base model is fixed to the pretrained denoised smoothing model from \citet{carlinicertified}. The optimal $\sigma$ for each input is defined as the $\sigma$ that maximizes the certified radius under the standard RS certification. Right: The certified radii curve of five independent samples against $\sigma$.}
    \label{fig:opt_dist}
    \vspace{-2mm}
\end{figure*}

\begin{table}[]
\caption{Comparison of key features of the literature and the proposed Dual RS.}
\vspace{-2mm}
\label{tab:related_work}
\centering
    \resizebox{.85\linewidth}{!}{
    \begin{tabular}{ccccc}
        \toprule
                                    & Literature & Flexible $\sigma$ & No test-time memorization & Flexible routing \\ \midrule
        Certified Routing            &  \citet{mueller2021boosting} & NA                &  \ding{51}              & Restricted             \\ \cmidrule{1-5}
        \multirow{5}{*}{Adaptive RS} & \citet{alfarra2022data}   & \ding{51}           &  \ding{55}                 &  NA         \\ 
                                    & \citet{wang2021pretrain} & \ding{51}           &  \ding{55}               & NA         \\ 
                                    & \citet{sukenik2022intriguing} & Restricted           &  \ding{51}              &  NA         \\ 
                                   &  \citet{jeong2024multi}          & Biased           &  \ding{51}              & NA  \\ 
                                  & This work & \ding{51}         &  \ding{51}             & \ding{51}          \\ \bottomrule
    \end{tabular}
    }
    \vspace{-5mm}
\end{table}

Despite recent advances, RS remains limited by a fundamental accuracy-robustness trade-off. Achieving a larger certified radius requires increasing the noise variance, which often reduces certified accuracy at smaller radii. This trade-off arises because prior methods apply a global noise variance shared across all inputs \citep{CohenRK19}. As illustrated in \cref{fig:opt_dist}, the noise variance that maximizes the certified radius varies substantially across samples. Recent work has explored input-dependent RS to mitigate this issue, but existing approaches either rely on test-time memorization \citep{alfarra2022data, wang2021pretrain}, intrinsically restrict adaptivity \citep{sukenik2022intriguing}, or systematically over-estimate the optimal variance \citep{jeong2024multi}. 

Motivated by these limitations, we propose \emph{Dual Randomized Smoothing} (Dual RS), a novel framework that enables RS certification with input-dependent noise variances. Our key insight is that RS certification remains valid, with appropriate confidence adjustments, as long as the noise variance is locally constant within the certified region rather than globally fixed across all inputs. 

\textbf{Main Contributions.} Our key contributions are:
\begin{itemize}
    \item A generalization of RS certification to locally constant noise variances, enabling flexible models to predict an optimal variance for each input. This generalization supports more favorable accuracy-robustness trade-offs, removing the fundamental limitation of global noise variance.
    \item A dual RS framework consisting of a variance estimator and a standard RS classifier. The variance estimator predicts the optimal $\sigma$ for each input, which is then used by the classifier for RS inference. We develop an iterative training procedure that sequentially optimizes both components. An alternative routing perspective is also discussed, where the variance estimator acts as a router that selects an appropriate off-the-shelf expert RS classifier based on the input. \cref{tab:related_work} compares key features of prior works with our proposed method.
    \item An extensive experimental evaluation of Dual RS, showing that Dual RS achieves strong performance across both small and large radii, outperforming prior input-dependent noise methods at most radii while adding roughly 60\% computational overhead at inference, compared to standard RS. Comparing against prior works, relative improvements of 15.6\%, 20.0\%, and 15.7\% are achieved at radii 0.5, 0.75, and 1.0 on \cifar, respectively, and 8.6\%, 17.1\% and 9.1\%  performance gain is delivered on \IN at radii 0.5, 1.0, and 1.5 respectively.
\end{itemize}

\section{Related Work} \label{sec:related_work}

\paragraph{Provable Adversarial Robustness}  
Empirical defenses against adversarial attacks are often unreliable \citep{athalye2018obfuscated, Croce020a}, motivating research on \emph{provable adversarial robustness}. Existing approaches fall into two categories: deterministic and probabilistic. Deterministic methods provide exact guarantees but scale poorly to large models \citep{gowal2018effectiveness, mirman2018differentiable, shi2021fast, muellercertified, de2024expressive, mao2023connecting, mao24understanding,baader2024expressivity, balauca2024overcoming, maoctbench,mao2026expressiveness}. Therefore, \emph{Randomized Smoothing} (RS) \citep{LecuyerAG0J19, CohenRK19} becomes the most widely used probabilistic method due to its scalability. Many works have improved RS by developing better training algorithms \citep{SalmanLRZZBY19, JeongS20, ZhaiDHZGRH020, JeongPKLKS21, JeongKS23}, leveraging pretrained models to construct base classifiers \citep{SalmanSYKK20, carlinicertified}, extending RS to different norms and noise distributions \citep{yang2020randomized, kumar2020curse}, designing alternative certification procedures \citep{xia2024mitigating, cullen2022double, li2022double}, proposing new evaluation metrics \citep{chenhao25acr}, and exploring ensemble techniques \citep{horvathboosting, liu2021enhancing}. However, a common limitation of these works is the use of a global noise variance in the smoothing distribution for all inputs, which leads to an inherent accuracy-robustness trade-off.

\paragraph{Input-dependent Randomized Smoothing}  
To mitigate the accuracy-robustness trade-off, recent works have explored adapting the noise variance per input. However, existing methods have notable limitations. Some rely on test-time memorization and are computationally expensive \citep{wang2021pretrain, alfarra2022data}. \citet{sukenik2022intriguing} provide theoretical guarantees for varying $\sigma$ with severely limited adaptivity. \citet{jeong2024multi} propose a multi-scale RS framework that cascades models with fixed variances, yet it always selects the largest variance that certifies an input, which often yields suboptimal results (\cref{fig:opt_dist}). Finally, \citet{lyu2024adaptive} introduce a two-stage framework for $\ell_\infty$ norm by splitting a fixed noise budget, but it lacks flexible per-input adaptiveness and fails to generalize to $\ell_2$ norms.

\section{Background}

This section introduces the key concepts of adversarial robustness and randomized smoothing (RS).

\textbf{Adversarial Robustness.}  
A model $f$ is adversarially robust if it produces consistent outputs under small perturbations. Given input $x$ and label $y$ with $f(x)=y$, $f$ is robust (with regard to $\ell_2$ norm) if $f(x')=f(x)$ for all $x'$ in $S(x)=\{x' \mid \|x'-x\|_2 \leq \epsilon\}$, where $\epsilon$ defines the perturbation magnitude.

\textbf{Randomized Smoothing.}  
RS provides certified robustness by constructing a smoothed classifier $g_c(x)=\argmax_{y \in \mathcal{Y}} \mathbb{P}_{\delta \sim \mathcal{N}(\mathbf{0},\sigma^2\mathbf{I})}[f(x+\delta)=y]$, where $f$ is the base classifier. The classifier $g_c$ is certifiably robust within an $\ell_2$ ball if the predicted class has probability greater than $0.5$ \citep{CohenRK19}. Improving the probability margin enhances the certified radius.

\textbf{Denoised Smoothing.}  
Denoised smoothing \citep{SalmanSYKK20} applies a denoiser before classification, i.e., $f(x+\delta)=f_{\mathrm{cls}}(\texttt{denoise}(x+\delta))$, where $\texttt{denoise}$ removes noise from the perturbed input and $f_{\mathrm{cls}}$ performs classification. This approach serves as a powerful paradigm for constructing RS base classifiers. Diffusion models have proven to be highly effective denoisers \citep{carlinicertified}, achieving state-of-the-art performance with off-the-shelf components. Following \citet{carlinicertified, jeong2024multi}, we adopt diffusion-based denoised smoothing to build base classifiers in our framework.

\section{Certification with Locally Constant Noise Variance} \label{sec:theory}

In this section, we formalize the main theoretical contribution of this work: we prove that RS certification remains valid when the noise variance is input-dependent, as long as it is constant within the certified region. This result provides the theoretical foundation for our dual RS framework.

Let $\gX \subseteq \R^d$ be the input space, $\gY$ the output space, and $f_c: \gX \rightarrow \gY$ the base classifier. The classifier smoothed with a Gaussian distribution $\gN(\bm{0}, \sigma^2 \bm{I})$ is defined as $g_c(\vx, \sigma) := \argmax_{y \in \gY} \Prob_{\bm{\delta} \sim \gN(\bm{0}, \sigma^2 \bm{I})}(f(\vx + \bm{\delta}) = y)$. Let $p_\sigma$ be the probability of the most likely class, i.e., $p_\sigma:=\max_{y \in \gY} \Prob_{\bm{\delta} \sim \gN(\bm{0}, \sigma^2 \bm{I})}(f(\vx + \bm{\delta}) = y)$. \citet{CohenRK19} prove that with a global $\sigma$ constant in $\gX$, the smoothed classifier $g_c$ is certifiably robust within an $\ell_2$ ball $\sB(\vx, R(\vx, \sigma))$ of radius $R(\vx, \sigma) := \sigma \Phi^{-1}(p_\sigma)$ centered at the input $\vx$, where $\Phi$ is the cumulative distribution function of the standard Gaussian distribution. We replace the global $\sigma$ with an input-dependent function $\sigma: \gX \rightarrow \Sigma$, where $\Sigma \subset \R^+$ is the discrete set of allowed values, and denote the smoothed classifier with input-dependent noise variance as $g_c(\vx, \sigma(\vx))$. Building on this setup, we present a certification theorem that refines the result of \citet{CohenRK19} by relaxing the assumption on $\sigma$ from being globally constant to locally constant.

\begin{restatable}[Certification with Locally Constant $\sigma$]{thm}{adaptiveRSdeterminisitc} \label{thm:adaptive-RS-deterministic}
    Fix $\vx_0 \in \gX$ and $f_c$. Assume $\sigma(\vx)$ is constant within the $\ell_2$ ball $\sB(\vx_0, R_\sigma)$. Then for all $\vx$ such that $\|\vx - \vx_0\|_2 \leq \min(R_\sigma, R(\vx, \sigma(\vx_0)))$, we have $g_c(\vx, \sigma(\vx)) = g_c(\vx_0, \sigma(\vx_0))$.
\end{restatable}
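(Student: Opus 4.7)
The plan is to reduce the claim to the standard Cohen--Rosenfeld--Kolter certification theorem by exploiting the hypothesis that $\sigma$ is constant on $\sB(\vx_0, R_\sigma)$. The key observation is that within this ball the input-dependent smoothed classifier coincides pointwise with an ordinary fixed-variance smoothed classifier:
\[
g_c(\vx,\sigma(\vx)) \;=\; g_c(\vx,\sigma(\vx_0)) \qquad \text{for every } \vx \in \sB(\vx_0, R_\sigma),
\]
since $\sigma(\vx) = \sigma(\vx_0)$ by local constancy. This converts the theorem into a statement about a classical Cohen-style smoothed classifier with fixed noise level $\sigma(\vx_0)$, to which the existing tooling applies directly.

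The concrete steps are then: (i) instantiate $g_c(\cdot,\sigma(\vx_0))$ as the standard smoothed classifier built from $f_c$ with Gaussian noise of variance $\sigma(\vx_0)^2$, and let $p_{\sigma(\vx_0)}$ denote its top-class probability evaluated at $\vx_0$; (ii) invoke the Cohen et al.\ robustness guarantee to conclude that $g_c(\vx,\sigma(\vx_0)) = g_c(\vx_0,\sigma(\vx_0))$ for every $\vx$ with $\|\vx-\vx_0\|_2 \le R(\vx_0,\sigma(\vx_0)) = \sigma(\vx_0)\Phi^{-1}(p_{\sigma(\vx_0)})$; (iii) intersect this certified region with $\sB(\vx_0, R_\sigma)$ so that the pointwise identity of the previous paragraph remains in force; (iv) chain the two equalities to obtain $g_c(\vx,\sigma(\vx)) = g_c(\vx_0,\sigma(\vx_0))$ on the intersection, which is exactly the ball of radius $\min(R_\sigma, R(\vx_0,\sigma(\vx_0)))$ appearing in the statement. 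The whole argument is therefore bookkeeping on two balls plus one quoted theorem.

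The only conceptual subtlety, and what I would flag as the main obstacle if there is one, is that the Cohen proof is a Neyman--Pearson likelihood-ratio comparison between two Gaussians that must share the same variance. If $\sigma$ were allowed to vary inside the certified region, that argument would break and no naive extension would rescue it. The locally-constant assumption is precisely what lets the shared-variance Neyman--Pearson computation go through unmodified on $\sB(\vx_0, R_\sigma)$; beyond that ball we make no claim, which is exactly why the certified radius must be clipped by $R_\sigma$. Consequently no new probabilistic inequality is needed, and the proof amounts to verifying that both radii are respected simultaneously.
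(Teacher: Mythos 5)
Your reduction is correct, and it is a genuinely different route from the paper's. The paper does not invoke the Neyman--Pearson certificate of \citet{CohenRK19} as a black box; instead it adapts the Lipschitz-continuity argument of \citet{SalmanLRZZBY19}, first extending the $1$-Lipschitzness of $\vx \mapsto \sigma\Phi^{-1}(p_\sigma(\vx))$ to each cell of the partition induced by the piecewise-constant $\sigma(\vx)$ (their Lemma on locally constant $\sigma$), and then re-deriving the radius $\sigma\Phi^{-1}(p_A(\vx_0))$ from scratch inside the cell containing $\sB(\vx_0,R_\sigma)$. Your observation --- that local constancy makes $g_c(\cdot,\sigma(\cdot))$ coincide pointwise with the fixed-variance classifier $g_c(\cdot,\sigma(\vx_0))$ on $\sB(\vx_0,R_\sigma)$, so the standard theorem applies verbatim and one only needs to intersect the two balls --- yields the same radius with less machinery, and you correctly identify why the impossibility result of \citet{sukenik2022intriguing} (which concerns $\sigma$ varying \emph{inside} the comparison region) does not obstruct this: the hypothesis clips the certificate at $R_\sigma$ precisely so that no cross-variance likelihood-ratio comparison is ever needed. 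What the paper's Lipschitz formulation buys in exchange is a statement (the per-cell Lipschitz lemma) that is reused conceptually in their discussion of why their framework escapes the Neyman--Pearson limitation; for the theorem as stated, your argument is equally rigorous. Two small bookkeeping points: the radius in the theorem is written $R(\vx,\sigma(\vx_0))$ but both the paper's derivation and yours produce $\sigma(\vx_0)\Phi^{-1}\bigl(p_{\sigma(\vx_0)}(\vx_0)\bigr)=R(\vx_0,\sigma(\vx_0))$, so you are silently (and correctly) reading it as evaluated at $\vx_0$; and Cohen's guarantee is stated with a strict inequality on $\|\vx-\vx_0\|_2$, a boundary technicality shared by the paper's own proof and not worth more than a remark.
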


The proof follows by carefully adapting the alternative argument of \citet{SalmanLRZZBY19} on the result of \citet{CohenRK19}, which leverages Lipschitz continuity, to remove the reliance on the global constancy of $\sigma$. The detailed proof is deferred to \cref{app:proof-of-adaptive-RS-deterministic}.

Practically, the assumption that $\sigma$ is constant within a neighborhood of $\vx_0$ can be satisfied in two ways: (1) by designing $\sigma(\vx)$ to be piecewise constant \citep{wang2021pretrain, alfarra2022data}, or (2) by certifying that $\sigma(\vx)$ is locally constant using deterministic certification methods \citep{SinghGPV19, WongK18, MullerMSPV22, ShiJKZJH24}.  
Approaches in the former category typically rely on test-time memorization, which is undesirable in practice. In contrast, approaches in the latter category, though extensively developed, are usually computationally expensive and less scalable.
Therefore, in this work, we seek a certification of $\sigma(\vx)$ that both scales well and eliminates test-time memorization. To this end, we propose to use a separate RS model to learn effective $\sigma(\vx)$ and certify the local constancy. To achieve this, we need to extend \cref{thm:adaptive-RS-deterministic} to a probabilistic setting, since RS in practice only provides probabilistic guarantees.

Before presenting the theorem, we extend the notion of RS to the practical setting, where $p_\sigma$ is lower bounded with uncertainty $\alpha$. Given $N$ trials of the event $I(f(\vx + \bm{\delta}) = y)$ and a predefined threshold $\alpha$, we can derive a lower bound $\hat{p}_\sigma$ such that $\Prob(p_\sigma \ge \hat{p}_\sigma) \ge 1-\alpha$ \citep{CohenRK19}. Consequently, the smoothed classifier $g_c(\vx, \sigma)$ is certifiably robust within the $\ell_2$ ball $\sB(\vx, \sigma \Phi^{-1}(\hat{p}_\sigma))$ with probability at least $1-\alpha$. Now we are ready to present the probabilistic version of \cref{thm:adaptive-RS-deterministic}.

\begin{restatable}[Probabilistic Guarantee with Confidence Adjustment]{thm}{adaptiveRSprobabilistic} \label{thm:adaptive-RS-probabilistic}
    Fix $\vx_0 \in \gX$ and $f_c$. Assume $g_c(\vx, \sigma(\vx_0))$ is certifiably robust within $\sB(\vx_0, R_c)$ with probability at least $1-\alpha$, and $\sigma(\vx)$ is constant within $\sB(\vx_0, R_\sigma)$ with probability at least $1-\beta$. Then for all $\vx$ such that $\|\vx - \vx_0\|_2 \leq \min(R_\sigma, R_c)$, we have $g_c(\vx, \sigma(\vx)) = g_c(\vx_0, \sigma(\vx_0))$ with probability at least $1-\alpha-\beta$.
\end{restatable}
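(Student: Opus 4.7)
The plan is to reduce the probabilistic statement to the deterministic one (\cref{thm:adaptive-RS-deterministic}) via a union bound over two ``bad events.'' Define $E_c$ to be the event that the RS certification for $g_c(\cdot, \sigma(\vx_0))$ fails, i.e., the event that $g_c(\vx, \sigma(\vx_0)) \neq g_c(\vx_0, \sigma(\vx_0))$ for some $\vx \in \sB(\vx_0, R_c)$. By assumption, $\Prob(E_c) \leq \alpha$. Similarly, let $E_\sigma$ be the event that $\sigma(\cdot)$ is not constant on $\sB(\vx_0, R_\sigma)$; by assumption, $\Prob(E_\sigma) \leq \beta$. The randomness in $E_c$ comes from the RS sampling that produces $\hat{p}_\sigma$ for the classifier, while the randomness in $E_\sigma$ comes from the independent RS procedure used to certify local constancy of $\sigma$. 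No independence is actually needed for the argument, only the union bound.

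Applying the union bound, $\Prob(E_c \cup E_\sigma) \leq \alpha + \beta$, so the complement event $E_c^c \cap E_\sigma^c$ occurs with probability at least $1-\alpha-\beta$. On this good event, we have simultaneously that (i) $\sigma$ is constant on $\sB(\vx_0, R_\sigma)$ and (ii) the smoothed classifier $g_c(\cdot, \sigma(\vx_0))$ agrees with $g_c(\vx_0, \sigma(\vx_0))$ throughout $\sB(\vx_0, R_c)$. The first bullet is precisely the hypothesis of \cref{thm:adaptive-RS-deterministic}, while the second bullet supplies the certified ball of radius $R_c$ in place of the deterministic $R(\vx, \sigma(\vx_0))$. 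Following the same argument as in \cref{thm:adaptive-RS-deterministic}, for any $\vx$ with $\|\vx-\vx_0\|_2 \leq \min(R_\sigma, R_c)$, local constancy gives $\sigma(\vx) = \sigma(\vx_0)$, so $g_c(\vx, \sigma(\vx)) = g_c(\vx, \sigma(\vx_0))$, and the classifier certification then gives $g_c(\vx, \sigma(\vx_0)) = g_c(\vx_0, \sigma(\vx_0))$. Chaining these two equalities yields the claim on the good event.

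Combining the two steps, $\Prob\bigl(g_c(\vx, \sigma(\vx)) = g_c(\vx_0, \sigma(\vx_0)) \text{ for all } \vx \in \sB(\vx_0, \min(R_\sigma, R_c))\bigr) \geq 1 - \alpha - \beta$, as required. The main subtlety — and the part I would be most careful about when writing this up — is making precise what random experiment each guarantee is defined with respect to, and justifying that the two failure events live on a common probability space so the union bound is legitimate. Beyond that, the argument is a clean plug-in of the deterministic result on the intersection of the two high-probability events, with the only modification being the replacement of the deterministic radius $R(\vx, \sigma(\vx_0))$ by the probabilistic radius $R_c$ coming from the confidence-$\alpha$ lower bound $\hat{p}_\sigma$.
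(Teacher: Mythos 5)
Your proposal is correct and follows essentially the same route as the paper: define the two failure events (classifier certification failing on $\sB(\vx_0, R_c)$ and local constancy of $\sigma$ failing on $\sB(\vx_0, R_\sigma)$), apply the union bound to get the $1-\alpha-\beta$ confidence, and then conclude on the complement event by chaining $\sigma(\vx)=\sigma(\vx_0)$ with the classifier's certified agreement. Your explicit remark that no independence is needed also matches the paper's own observation following the theorem statement.
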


The proof follows by applying union bound to upper bound the failure probability. The detailed proof is deferred to \cref{app:proof-of-adaptive-RS-probabilistic}. Note that \cref{thm:adaptive-RS-probabilistic} does not assume independence between the two failure events, and therefore remains valid even when the two failure events are correlated, e.g., correlated noise samples may be used in two certifications.

\textbf{Comparison with Prior Works.} Although not explicitly formalized, the idea of using a locally constant $\sigma$ has been explored in prior work \citep{wang2021pretrain, alfarra2022data}. \citet{wang2021pretrain} partition $\gX$ into a collection of $\ell_2$ balls, referred to as robust regions, and assign a constant $\sigma$ to each region. These regions are allocated and stored at test time, which prevents parallel inference and leads to dependence on the prior test cases. Similar strategies are adopted by \citet{alfarra2022data}. Beyond formalization and rigorous proof, \cref{thm:adaptive-RS-deterministic} further improves by eliminating the need for test-time memorization and instead ensuring local constancy through certifying $\sigma(\vx)$, which can be any learned model or hand-crafted function. 

Separately, \citet{sukenik2022intriguing} also study RS with input-dependent $\sigma(\vx)$ and show that proofs based on Neyman-Pearson lemma cannot allow reasonably flexible $\sigma(\vx)$. We circumvent this limitation by leveraging a proof based on Lipschitz continuity, similar to \citet{SalmanLRZZBY19,jeong2024multi}, which enables much greater flexibility in defining $\sigma(\vx)$. Note that our result does not restrict the behavior of $\sigma(\vx)$ outside the certified region, which can be arbitrarily complex. \cref{tab:related_work} summarizes the difference between this work and prior works.

Despite these advantanges, \cref{thm:adaptive-RS-probabilistic} introduces a confidence penalty of $\beta$ to account for the probabilistic guarantee of $\sigma(\vx)$ being locally constant. This cost is inevitable when using any certification method that is not deterministic. However, in practice, we find that this cost is negligible when using RS to certify $\sigma(\vx)$. We list a few numerical examples under different configurations in \cref{tab:beta_cost} in \cref{app:deferred-tables-figures}, confirming that $\beta$ has minimal impact on the certified radius.

\section{The Dual Randomized Smoothing Framework}\label{sec:ars_frame}

\begin{figure}
    \centering
    \resizebox{.8\linewidth}{!}{
    \begin{tikzpicture}[>=Stealth, line width=.9pt]

        \node (x_1)         at (0, 3.0) {$\vx$};
        \node (sigma_est)   at (0, 4.5) {$g_e(\vx; \sigma_e)$};
        \begin{scope}[on background layer]
        \node (sigma_est_box) [fit=(sigma_est), 
            rounded corners,
            fill=red!10, inner sep=10pt] {};
        \end{scope}
        \draw[->, color=cgreen] (x_1) -- ($(sigma_est_box.south)$);

        \node (sigma_c)     at (2.6, 4.7) {$\sigma_c(x)$};
        \node (R_sigma)     at (2.4, 4.3) {$R_{\sigma}$};
        \draw[->, color=cgreen] ($(sigma_est_box.east)+(0,0.2)$) -- ($(sigma_c.west)$);
        \draw[->] ($(sigma_est_box.east)+(0,-0.2)$) -- ($(R_sigma.west)$);

        \node (cls)   at (5.2, 4.5) {$g_c(\vx; \sigma_c(\vx))$};
        \begin{scope}[on background layer]
        \node (cls_box) [fit=(cls), 
            rounded corners,
            fill=blue!10, inner sep=10pt] {};
        \end{scope}

        \draw[->, color=cgreen] ($(sigma_c.east)$) -- ($(cls_box.west)+(0,0.2)$);
        
        \node (y_hat)     at (8.3, 4.7) {$\hat{y}$};
        \node (R_c)     at (7.5, 4.3) {$R_c$};

        \draw[->, color=cgreen] ($(cls_box.east)+(0,0.2)$) -- (y_hat);
        \draw[->] ($(cls_box.east)+(0,-0.2)$) -- (R_c);
        \node (r_final)   at (7.5, 3) {$R_{\text{final}}=\min (R_{\sigma}, R_c)$};

        \draw[->,shorten >=2pt,shorten <=2pt] (R_sigma) |- ($(r_final.west)+(0,-0.1)$);
        \draw[->,shorten >=2pt,shorten <=2pt] (R_c.south) -- ($(r_final.north)$);

    \end{tikzpicture}
    }

    \caption{The dual RS framework. First, a RS model $g_e$ smoothed with a global $\sigma_e$ is deployed to estimate $\sigma_c(\vx)$ and return a certified radius for the estimation, $R_{\sigma}$. Second, another RS model is smoothed with $\sigma_c(\vx)$, and then perform a standard classification and return a certified radius for the classification, $R_c$. The final prediction is the result of the second stage, with a final certified radius $R_{\text{final}}=\min(R_{\sigma}, R_c)$. The green arrows indicate activated paths during inference.}
    \label{fig:ars}
    \vspace{-4mm}
\end{figure}
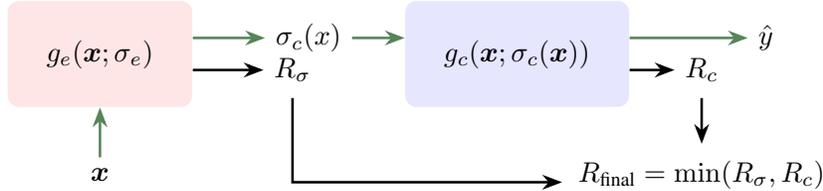

In this section, we present the dual RS framework implementing RS with input-dependent noise variances. We first give an overview of the framework, followed by details of the inference and certification process. Then, we describe the training methods to optimize the performance. Finally, we discuss an alternative routing perspective of the dual RS framework.

\subsection{Inference \& Certification}

\cref{fig:ars} illustrates the dual RS framework. Given an input $\vx$, a \emph{variance estimator} predicts an appropriate variance, $\sigma_c$, followed by a \emph{classifier} smoothed with $\sigma_c$ to perform the final classification. Intuitively, the variance estimator partitions the input space into disjoint subsets associated with different values of $\sigma_c$, and assign the input (ideally also its neighborhood) to the corresponding subset. This formulation exactly matches the definition of robustness, in the task of predicting the optimal $\sigma_c$. Therefore, a separate RS model is applied, which uses a pre-defined global noise variance to certify the estimated $\sigma_c$. Then, another base model can be smoothed via RS with $\sigma_c$ to perform certified classification. The final certified radius is then guaranteed by \cref{thm:adaptive-RS-probabilistic}. Note that we set $\sigma_e \ge \max_{\sigma_c \in \Sigma} \sigma_c$ to not limit the final certified radius inherently.

Unless otherwise noted, we use diffusion denoised smoothing to build both the variance estimator and the RS classifier, for the simplicity and efficiency. Formally, we denote the two RS models as:
\begin{align*}
    g_e(\vx, \sigma_e) := \argmax_{\sigma_i \in \Sigma} \mathbb{P}_{\bm{\delta_e} \sim \mathcal{N}(0, \sigma_e^2 I)}(h_e(\texttt{denoise}(\vx + \bm{\delta_e})) = \sigma_i), \\
    g_c(\vx, \sigma_c) := \argmax_{\hat{y} \in \gY} \mathbb{P}_{\bm{\delta_c} \sim \mathcal{N}(0, \sigma_c^2 I)}(h_c(\texttt{denoise}(\vx + \bm{\delta_c})) = \hat{y}),
\end{align*}
where \texttt{denoise} represents a single-step denoising using an off-the-shelf diffusion denoiser, and $h_e$ and $h_c$ are base models for variance estimation and classification, respectively.

At inference time, given an input $\vx$, we sample noise samples $\{\bm{\delta_e}\}$ from $\gN(0, \sigma_e^2 I)$, and use the \predict function from \citet{CohenRK19} to predict the noise variance $\sigma_c(\vx)$ with uncertainty $\alpha/2$. Then, we sample noise samples $\{\bm{\delta_c}\}$ from $\gN(0, \sigma_c(\vx)^2 I)$, and use the \predict function again to predict the class label $\hat{y}$ with uncertainty $\alpha/2$. The final prediction is $\hat{y}$, with a total uncertainty of $\alpha$, using the union bound again on the failure events, similar to the proof of \cref{thm:adaptive-RS-probabilistic}.
To certify the prediction, we use the \certify function from \citet{CohenRK19} to certify the local constancy of $\sigma_c(\vx)$ with uncertainty $\alpha/2$, and certify the classification with uncertainty $\alpha/2$. The final certified radius is $R_{\text{final}} = \min(R_{\sigma}, R_c)$, where $R_{\sigma}$ is the certified radius for the estimation of $\sigma_c(\vx)$, and $R_c$ is the certified radius for the classification. The total uncertainty is $\alpha$, as guaranteed by \cref{thm:adaptive-RS-probabilistic}. We note that for simplicity, we use the same uncertainty level $\alpha/2$ for both certifications, but they can be adjusted flexibly as long as the total uncertainty does not exceed $\alpha$.

\subsection{Training methods}

\subsubsection{Training the Variance Estimator} \label{sec:sigma_train}

\paragraph{Building the Training Dataset.} Training $h_e$ requires ground-truth labels for the optimal noise $\sigma_c(\vx)$ of each input. Given a candidate set $\Sigma$ and a fixed $h_c$, we evaluate for each input the certified radius under each $\sigma_i \in \Sigma$. The label for the optimal noise $\sigma_c(\vx)$ is then $\argmax_{i} R_c(\vx, \sigma_i)$. This step is usually the most computationally expensive part of the training, as it requires multiple certifications for each input. However, it only needs to be performed once before training $h_e$, and can be parallelized across multiple devices. In practice, we use a smaller budget $N$ than required during certification to estimate $R_c(\vx, \sigma_i)$. Specifically, a much smaller $N$ can be used to estimate $\hat{p}_A$, then this weaker estimation can be plugged into the radius formula to compute an approximation of the certified radius. In \cref{app:reduce-training-cost}, we conduct a detailed study on the effect of using a smaller budget, showing that it can significantly reduce the training cost with minimal performance degradation. As a result, we build the dataset with $N=100$ in the main experiments (c.f. \cref{sec:exp}), which matches the cost of performing a single RS inference. Another strategy to reduce the training cost is to train only on a subset of the train data, which we also study in \cref{app:reduce-training-cost}. Additionally, we discard inputs with zero certified radius across all $\sigma_i$ to reduce the noise during the training.

\paragraph{Training with Soft Labels.}
Estimating optimal variance is formulated as a classification task, but it has certain special properties. Even if the estimated $\sigma_c$ is not optimal, a non-zero certified radius is still likely. For example, assume that given $\Sigma = \{0.25, 0.5, 1.0\}$, the certified radii of $x_1$ are 0.0, 1.6 and 0.0, respectively, while those of $x_2$ are 0.3, 0.4 and 0.3, respectively. Choosing the wrong $\sigma$ for $x_1$ is more harmful than for $x_2$ intuitively, as the latter still has a reasonably close certified radius. 
Motivated by this, we propose to use soft labels introduced below to train the variance estimator. Formally, the soft label for the variance estimation is defined as:
\begin{equation*}
    y_i = \frac{\exp(R_c(\vx, \sigma_i))}{\sum_{\sigma_j \in \Sigma} \exp(R_c(\vx, \sigma_j))}.
\end{equation*}
A standard cross-entropy loss is then applied between the soft labels and the predicted class probabilities to evaluate the estimation performance. 

\paragraph{Consistency Regularization.}

Many strategies have been proposed to increase the certified radius in the standard RS training. We choose one of them, consistency regularization \citep{JeongS20}, to further improve the certified radius of the estimated $\sigma$.
Formally,
\begin{equation*}
   \mathcal{L}_{\text{con}}(\vx) := \lambda \mathbb{E}_{\delta}\left[ \mathrm{KL} (\hat{f}(\vx) \| f(\vx+\bm{\delta}))  \right] + \eta \mathrm{H}(\hat{f}(\vx)),
\end{equation*}
where $\hat{f}(x) = \mathbb{E}(f(x+\delta))$, $\mathrm{KL}$ is the Kullback-Leibler divergence, $\mathrm{H}$ is the entropy, and $\lambda$ and $\eta$ are hyperparameters controlling the trade-off between accuracy and robustness. We remark that any other RS training strategies can be alternatively applied; we choose consistency because it is the fastest while being competitive in performance (\citet{JeongKS23}, Appendix E).

\paragraph{Overall Objective.}

The overall loss function to train the variance estimator is a weighted average  between the soft-label cross-entropy loss and the consistency loss:
\begin{equation*}
    \mathcal{L}_{\sigma} = \E_{\vx} \left[ w_e(\vx)\left(\mathcal{L}_{\text{softCE}}(\vx) + w_r(\vx)\mathcal{L}_{\text{con}}(\vx)\right) \right],
\end{equation*}
where $w_e(\vx)$ and $w_r(\vx)$ are two weighting functions. We introduce a balancing weight $w_e(\vx)$ because the distribution of optimal $\sigma_c$ is usually skewed. Formally, assume the fraction of training samples with optimal noise $\sigma_i$ is $q_i$, then $w_e(\vx) = 1/q_i$ if the optimal noise for $\vx$ is $\sigma_i$. We apply two versions of the consistency regularization weight $w_r(\vx)$ in our experiments, i.e. a weaker version $w_{r,\text{weak}}(\vx)$ and a stronger version $w_{r,\text{strong}}(\vx)$. We set $w_{r,\text{weak}}(\vx) = R_c(\vx, \hat{\sigma}_{\text{min}}) / C$ and $w_{r,\text{strong}}(\vx) = R_c(\vx, \hat{\sigma}_{\text{max}}) / C$, where $\hat{\sigma}_{\text{min}}$ and $\hat{\sigma}_{\text{max}}$ are the minimum and maximum variance predicted by the variance estimator among all noisy samples respectively, and both of them are rescaled to $[0,1]$ by a constant $C$. The stronger version is more optimistic, as it asks for larger certified radius for the variance estimator, with the risk of over-regularization; the weaker version is more conservative, as it only asks for necessary certified radius for the variance estimator, with the risk of under-regularization. We find the stronger version works slightly better for easy tasks such as \cifar, while the weaker version is slightly better for harder tasks such as \IN, consistent to intuition. In many cases, both versions perform similarly (c.f. \cref{sec:delving}). We keep the choice fixed within each dataset.

\vspace{-1mm}

\subsubsection{Adapting the Classifier to the Variance Estimator} \label{sec:cls_finetune}

Prior work \citep{carlinicertified} have shown that finetuning the off-the-shelf classifier with regard to the RS framework can significantly improve the performance. In this section, we follow a similar approach, showing how to adapt the classifier to the dual RS framework.

Given a fixed variance estimator $g_e$, we finetune the classifier $h_c$ under the estimated noise variances. Formally, given an input $\vx$, we first query the noise variance $\sigma_c(\vx)$ from $g_e$. Then, we sample noise $\bm{\delta_c} \sim \gN(0, \sigma_c(\vx)^2 I)$, and apply the denoising step to obtain $\tilde{\vx} = \texttt{denoise}(\vx + \bm{\delta_c})$. Finally, we apply a standard cross-entropy loss between the prediction $h_c(\tilde{\vx})$ and the ground-truth label $y$. This procedure follows \citet{carlinicertified} with only one difference: the noise variance is input-dependent, estimated by $g_e$, instead of being a global constant.

The described training process naturally leads to an alternating training scheme, where we iteratively train the variance estimator and finetune the classifier. In practice, we find that one round of classifier finetuning is usually sufficient to achieve good performance, i.e., training the variance estimator from scratch based on the off-the-shelf classifier, followed by one round of classifier finetuning. More rounds of alternating training may lead to marginal improvements, but at a much higher computational cost (c.f. \cref{app:finetune-rounds}).

\vspace{-2mm}

\subsection{Routing with Expert RS Models} \label{sec:route}
\vspace{-1mm}

Routing is to select the best model from a pool of expert models for each input. It has been widely studied in the context of mixture-of-experts, especially for large language models \citep{varangotreille2025doinglesssurveyrouting}. In this section, we present a novel perspective of the proposed dual RS framework as a router among a pool of pretrained expert RS models.

\cref{sec:sigma_train} proposes strategies to train the variance estimator to predict the best $\sigma_c$ for a fixed base classifier $h_c$. This naturally requires $h_c$ to perform well under all $\sigma_i \in \Sigma$, each for a subset of inputs. However, as well-known in the RS literature (e.g., \citet{chenhao25acr}), no single model wins uniformly across all noise levels. Luckily, \cref{thm:adaptive-RS-probabilistic} does not restrict $h_c$ to be the same model under different $\sigma_i$. Therefore, we can define $g_c(\vx, \sigma(\vx))$ to be the best expert among a pool of models. Formally, let $\gH := \{ \gH_{\sigma_i}\}$ be the pool of the pretrained expert models where $\gH_{\sigma_i}$ are expert models performing well under $\sigma_i$. Define $\gX_{\sigma_i} := \{\vx \mid g_e(\vx, \sigma_e) = \sigma_i\}$ to be the subset of inputs assigned to $\sigma_i$ by the variance estimator. Then we define $g_c(\vx, \sigma(\vx)) := \gH_{\sigma_i}(\vx, \sigma_i)$ for all $\vx \in \gX_{\sigma_i}$. In other words, the variance estimator $g_e$ serves not only as a predictor for the optimal noise variance, but also as a router to select the best expert RS model for each input. The training process of $g_e$ remains unchanged, except that the certified radius $R_c(\vx, \sigma_i)$ is now evaluated using the corresponding expert model $\gH_{\sigma_i}$. Note that we do not evaluate the performance of the expert models except with the corresponding variance, i.e., $\gH_{\sigma_i}$ is not evaluated with $\sigma_j$ for $j \ne i$.

The proposed routing perspective of dual RS has several advantages. First, it allows leveraging existing expert models without the need for training a new base classifier that performs well under all noise levels. This is particularly useful when the training cost is prohibitively high. Second, it enables the use of specialized models that excel in specific noise regimes, potentially improving overall performance. Third, it provides a flexible framework that can easily incorporate new expert models, with the minimal effort of re-training the variance estimator. This is because certification under dual RS has much smaller overhead given the certified radii of the expert models since the variance estimator is usually lightweight. Fourth, assuming the expert models are trained independently, improving expert models usually leads to a strict improvement in the overall performance, as we will demonstrate in \cref{sec:exp}. However, due to the routing nature, the performance of dual RS is upper bounded by the performance of the expert model $\gH_{\sigma_i}$ within each $\gX_{\sigma_i}$.

As a final remark, the routing perspective of RS is not limited to the dual RS framework, and can be extended to deterministic certification methods as well. Given a pool of expert models (potentially trained with different algorithms and hyperparameters), offering different trade-offs between accuracy and robustness, one can train a standard RS model to route each input to the best expert model, then certify the routing choice by RS. The final certified radius is the minimum between the certified radius of the routing RS model and that of the selected expert model. This generalization opens up new possibilities for combining the strengths of various certification methods within a unified framework. We leave the exploration of this direction to future work.

\textbf{Comparison with Prior Works.}  \citet{mueller2021boosting} presents a similar idea which routes among a standard network and a robust network using a deterministically certified router. We generalize their idea in the following ways: (i) they only considers routing between two models due to the design of their router, while our framework allows routing among multiple models natively; (ii) they uses a deterministic certification method to certify a heuristically trained router, while our framework uses RS to train and certify the router, which is more scalable and flexible; (iii) they focus on improving the accuracy-robustness trade-off under the given radii, while our objective is to optimize the overall performance across all radii.

\vspace{-2mm}

\section{Experimental Evaluation}\label{sec:exp}
\vspace{-1mm}

In this section, we extensively evaluate the proposed dual RS method on \cifar and \IN. The results demonstrate that dual RS can achieve strong performance across different radii, which is unattainable with a global noise variance. Further, it incurs only a modest computational overhead compared to standard RS. We include all implementation details in \cref{app-exp} and only highlight key experimental settings here.

\textbf{Baselines.}
We compare our method against two baselines: (i) diffusion denoised smoothing with global noise variances \citep{carlinicertified}, which we use as the base classifiers, and (ii) the state-of-the-art input-dependent RS method \citep{jeong2024multi}, denoted as \emph{Multiscale}. Unless otherwise stated, all results are reported with $N=10{,}000$ noise samples for certification with the overall uncertainty level $\alpha=0.001$.

\textbf{\cifar Setup.} Unless otherwise stated, $\Sigma$ is set to $\{0.25, 0.5, 1.0\}$. Following the baselines, we employ a 50M-parameter diffusion model \citep{nichol2021improved} as the denoiser, and a 87M-parameter ViT model \citep{dosovitskiy2020image} as the classifier. A ResNet-110 \citep{he2016deep} is used as the base model for the variance estimator, and $N = 100$ is used to estimate $R_c(\vx; \sigma_i)$ during training.

\textbf{\IN Setup.} Unless otherwise stated, $\Sigma$ is set to $\{0.5, 1.0\}$. Following \citet{carlinicertified}, we utilize a 552M-parameter class-unconditional diffusion model \citep{dhariwal2021diffusion} as the denoiser and a 305M-parameter BEiT model \citep{bao2021beit} as the classifier. A ResNet-50 is used as the variance estimator, and $N = 100$ is used to estimate $R_c(\vx; \sigma_i)$ during training.

\subsection{Key Results}

\begin{figure*}[]
    \centering
    \begin{subfigure}[t]{.37\linewidth}
        \centering
        \includegraphics[width=.92\linewidth]{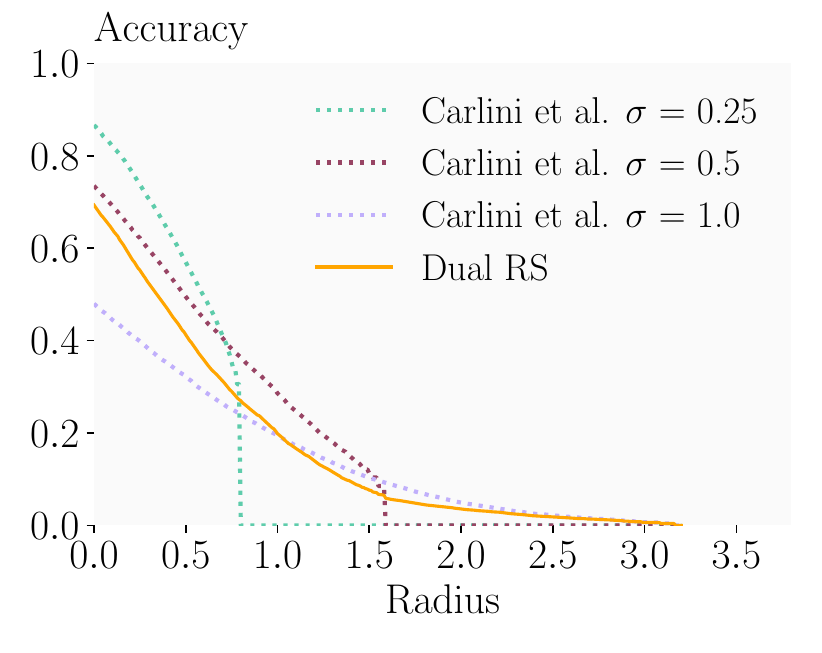}
        \vspace{-3mm}
        \caption{Comparison with Standard RS} \label{subfig:against_pretrained_base_model}
        \vspace{-1mm}
    \end{subfigure}
    \begin{subfigure}[t]{.37\linewidth}
        \centering
        \includegraphics[width=.92\linewidth]{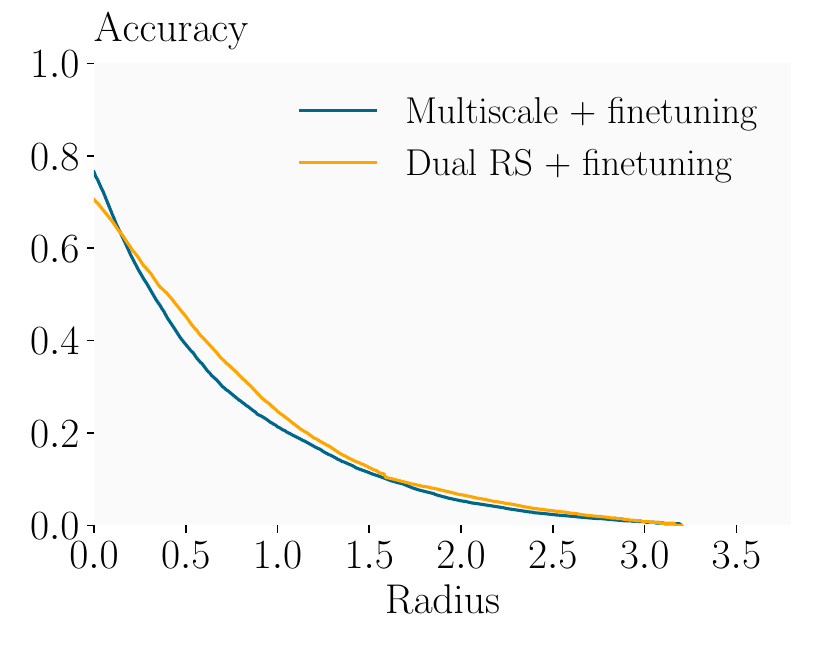}
        \vspace{-3mm}
        \caption{Comparison with Input-dependent RS}
        \label{subfig:against_multiscale}
        \vspace{-1mm}
    \end{subfigure}
    \caption{Certified accuracy on \cifar across radii.}
    \label{fig:main_result}
    \vspace{-2mm}
\end{figure*}

\begin{table}[]
    \centering
    \caption{Certified accuracy on \cifar across different certification radii. \textbf{Bold} entries indicate whenever Dual RS outperforms Multiscale.}
    \label{tab:main_result}
    \vspace{-2mm}
    \resizebox{0.95\linewidth}{!}{
    \begin{tabular}{cccccccccccccc}
        \toprule
         \multicolumn{2}{c}{Method}                          & $\sigma$                     & 0.00 & 0.25 & 0.50 & 0.75 & 1.00 & 1.25 & 1.50 & 1.75 & 2.00 & 2.25 & 2.50 \\ \midrule
         \multirow{4}{*}{Off-the-shelf} & \multirow{3}{*}{Carlini et al.} & 0.25                         & 86.61 & 73.90          & 57.02          & 35.30 & 0.00 & 0.00 & 0.00 & 0.00 & 0.00 & 0.00 & 0.00 \\ %
                                        &  & 0.5                            & 73.49 & 62.23          & 49.46          & 38.20 & 28.58 & 19.54 & 11.22 & 0.00 & 0.00 & 0.00 & 0.00 \\ %
                                         & & 1.0                            & 47.98 & 39.85          & 32.17          & 25.16 & 19.34 & 14.49 & 10.30 & 7.32 & 4.89 & 3.35 & 2.15 \\ 
          & Ours     & \{0.25, 0.5, 1.0\}                   & 68.34 & 55.25 & 41.28 & 29.01 & 19.85 & 12.73 & 7.62 & 4.73 & 3.54 & 2.62 & 1.83 \\ \cmidrule{1-14}
          \multirow{2}{*}{Adaptive finetuning} & Multiscale                   & \{0.25, 0.5, 1.0\}   & 76.51 & 54.78          & 39.15          & 28.46 & 21.33 & 15.95 & 11.40 & 7.91 & 5.31 & 3.63 & 2.34 \\  %
          & Ours    &   \{0.25, 0.5, 1.0\}    & 70.53 & \textbf{57.48} & \textbf{45.27} & \textbf{34.15} & \textbf{24.68} & \textbf{17.84} & \textbf{12.46} & \textbf{8.83} & \textbf{6.65} & \textbf{4.73} & \textbf{3.14} \\ \bottomrule
    \end{tabular}
    }
    \vspace{-5mm}
\end{table}

\paragraph{Dual RS with Single Pretrained Classifier.}
We first evaluate the performance of dual RS with a pretrained global classifier, as described in \cref{sec:sigma_train}. 
\cref{subfig:against_pretrained_base_model} and \cref{tab:main_result} compare the pretrained diffusion denoised smoothing model with different global noise variances and dual RS using the same classifier on \cifar. While the baseline models with a small global noise variance (e.g., $\sigma=0.25$ or $0.5$) achieve high certified accuracy at small radii, they fail to provide non-trivial guarantees at larger radii. Conversely, the model with a large global noise variance ($\sigma=1.0$) attains good certified accuracy at large radii but suffers from low accuracy at small radii. In contrast, dual RS has a strong performance across all radii, achieving a superior accuracy-robustness trade-off. This demonstrates that dual RS can effectively leverage the pretrained classifier.

\vspace{-3mm}

\paragraph{Dual RS with Single Classifier Finetuning.}
\cref{subfig:against_multiscale} and \cref{tab:main_result} compare dual RS with Multiscale \citep{jeong2024multi}, the state-of-the-art input-dependent RS method. For a fair comparison, we finetune the classifier in dual RS as described in \cref{sec:cls_finetune}, while Multiscale adopts the finetuned diffusion denoiser described in \citet{jeong2024multi}. The result shows that dual RS consistently outperforms Multiscale across most radii, with especially strong improvements in the small-radius region. At radii $0.5$, $0.75$, and $1.0$, dual RS improves certified accuracy by $15.6\%$, $20.0\%$, and $15.7\%$, respectively.
On a single NVIDIA RTX 4090 GPU with batch size 1000 and $N=10{,}000$, certifying with dual RS requires 22.58 seconds per input on average, compared to 14.07 seconds for standard RS and 20.21 seconds for Multiscale. Thus, dual RS incurs only a modest computational overhead relative to standard RS, while achieving significant performance gains. We remark that Multiscale requires multiple rounds of certification for inputs with small certified radii, leading to a higher worst-case certification time (14.07, 28.14, and 42.21 seconds on average for 1, 2, and 3 rounds, respectively), while dual RS spends a fixed amount of time for all inputs.

\begin{figure*}[]
    \centering
    \includegraphics[width=.34\linewidth]{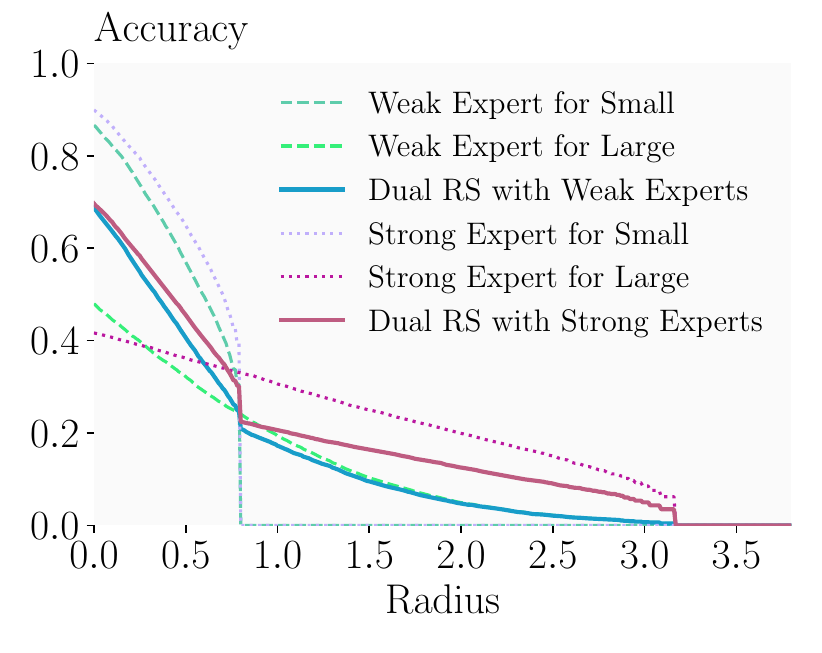}
    \vspace{-4mm}
    \caption{Comparison between dual RS built on weak and strong experts, respectively, along with the experts.}
    \label{fig:comp_route}
    \vspace{-3mm}
\end{figure*}

\begin{table}[]
    \centering
    \caption{Certified accuracy on \IN, structured in the same way as \cref{tab:main_result}. }
    \label{tab:main_result_in}
    \vspace{-2mm}
    \resizebox{0.65\linewidth}{!}{
    \begin{tabular}{cccccccc}
        \toprule
       \multicolumn{2}{c}{Method}                            & $\sigma$           & 0.0  & 0.5  & 1.0  & 1.5  & 2.0  \\ \midrule
       \multirow{3}{*}{Off-the-shelf} & \multirow{2}{*}{Carlini at al.}   & 0.5                & 74.4 & 64.4 & 52.4 & 34.8 & 0.0  \\ 
                                        & & 1.0                & 56.0 & 47.8 & 37.4 & 29.4 & 24.0 \\ %
        & Ours & \{0.5, 1.0\}       & 67.8 & 54.6 & 40.4 & 28.0 & 15.6 \\ \cmidrule{1-8}
        \multirow{2}{*}{Adaptive finetuning} & Multiscale       & \multirow{2}{*}{\{0.5, 1.0\}} & 75.0 & 55.8 & 41.0 & 30.8 & 25.2 \\ %
        & Ours   &        & 74.0 & \textbf{60.6} & \textbf{48.0} & \textbf{33.6} & 17.0 \\ \bottomrule
    \end{tabular}
    }
    \vspace{-7mm}
\end{table}

\textbf{Dual RS with Multiple Pretrained Experts (Routing).} 
We further evaluate the efficacy of dual RS as a routing mechanism with multiple pretrained expert classifiers, as discussed in \cref{sec:route}. Specifically, we consider two experts: one specialized for $\sigma=0.25$ and another specialized for $\sigma=1.0$. For $\sigma=0.25$, we define the \emph{Weak Expert for Small} to be an off-the-shelf denoised smoothing model, and the \emph{Strong Expert for Small} to be a finetuned denoised smoothing model by \citet{carlinicertified} on $\sigma=0.25$. For $\sigma=1.0$, we define the \emph{Weak Expert for Large} to be the same off-the-shelf model, and the \emph{Strong Expert for Large} to be another off-the-shelf model, pretrained by \citet{chenhao25acr}, which achieves the state-of-the-art performance for large radii. We found that incorporating the weight $w_e(\vx)$ degrades performance on strong experts, so we set $w_e(\vx)=1$ for all $\vx$ in this experiment. \cref{fig:comp_route} compares these four experts and dual RS built upon weak and strong experts, respectively. The results show that dual RS effectively leverages the improved performance of the strong experts, achieving a better accuracy-robustness trade-off than that of the weak experts. This demonstrates that dual RS can flexibly incorporate different expert models to further enhance performance.

\textbf{Dual RS on Large Datasets.} \label{sec:in_exp}
We further evaluate dual RS on \IN. As shown in \cref{tab:main_result_in}, Dual RS achieves strong certified accuracy in the medium and small radii region. Compared to Multiscale, the state-of-the-art input-dependent RS, dual RS gets 8.6\%, 17.1\% and 9.1\% performance advantage at radii 0.5, 1.0 and 1.5 respectively. Overall, these results show that Dual RS scales effectively to large datasets and high-dimensional input spaces.

We further conduct ablation studies on three aspects: (1) different choices of $\sigma$ candidate sets, (2) strategies for constructing the train set of the variance estimator, and (3) different architectures of the variance estimator, detailed in \cref{app:sigma-candidates}, \cref{app:reduce-training-cost}, and \cref{app:smaller-estimator}, respectively. The key findings are: (1) the candidate set $\Sigma$ strongly affects the favored radii, similar to the observation in the global variance methods; (2) the variance estimator can be trained with minimal performance degradation on a much smaller $N$ than certification (up to 99\% cost reduction) or a much smaller train set (up to 80\% cost reduction); and (3) the performance of dual RS is robust to the architecture of the variance estimator.
\vspace{-3mm}
\subsection{Delving into Dual RS} \label{sec:delving}

\begin{figure*}[]
    \centering
    \begin{subfigure}[t]{.3\linewidth}
        \centering
        \includegraphics[width=.9\linewidth]{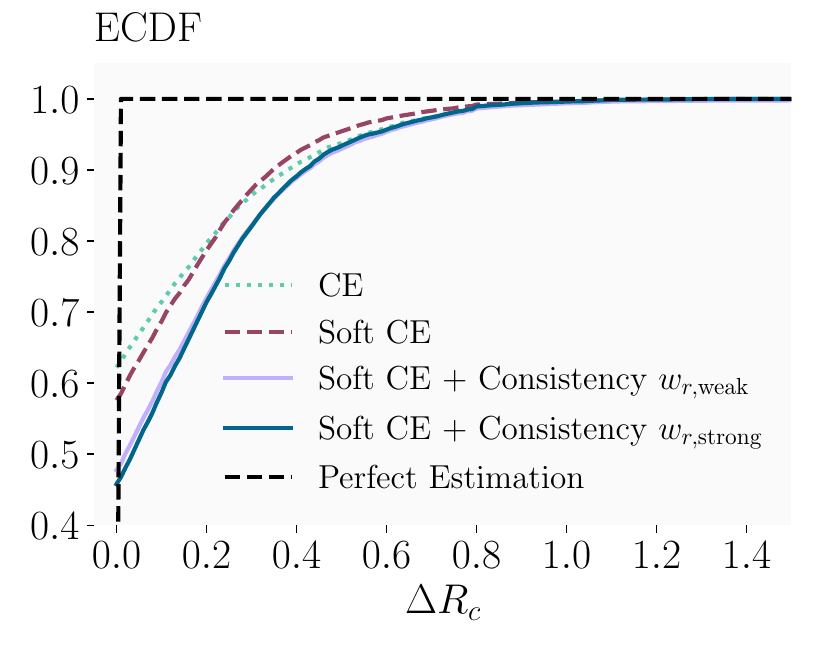}
        \vspace{-2mm}
        \caption{}
        \label{subfig:rloss}
    \end{subfigure}
    \begin{subfigure}[t]{.3\linewidth}
        \centering
        \includegraphics[width=.9\linewidth]{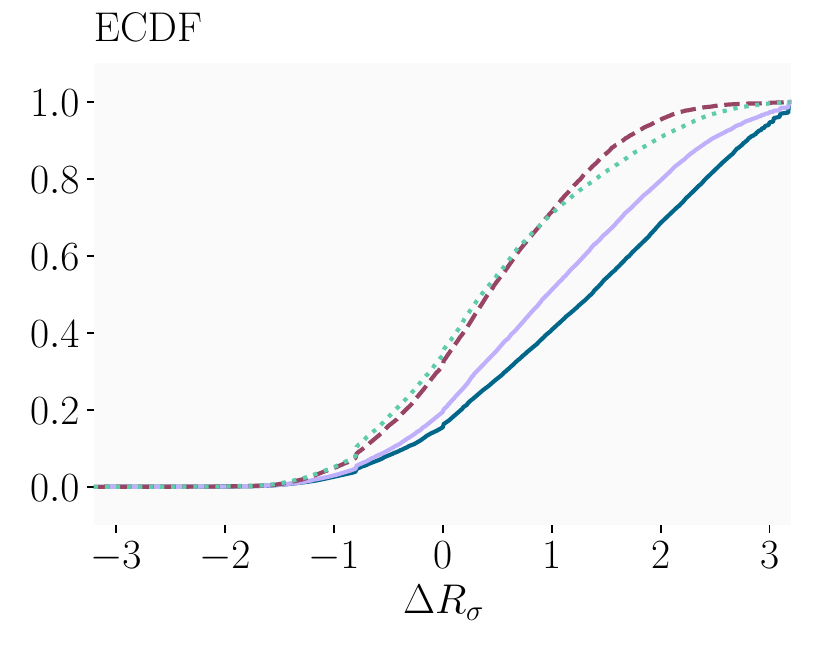}
        \vspace{-2mm}
        \caption{}
        \label{subfig:align}
    \end{subfigure}
    \begin{subfigure}[t]{.3\linewidth}
        \centering
        \includegraphics[width=.9\linewidth]{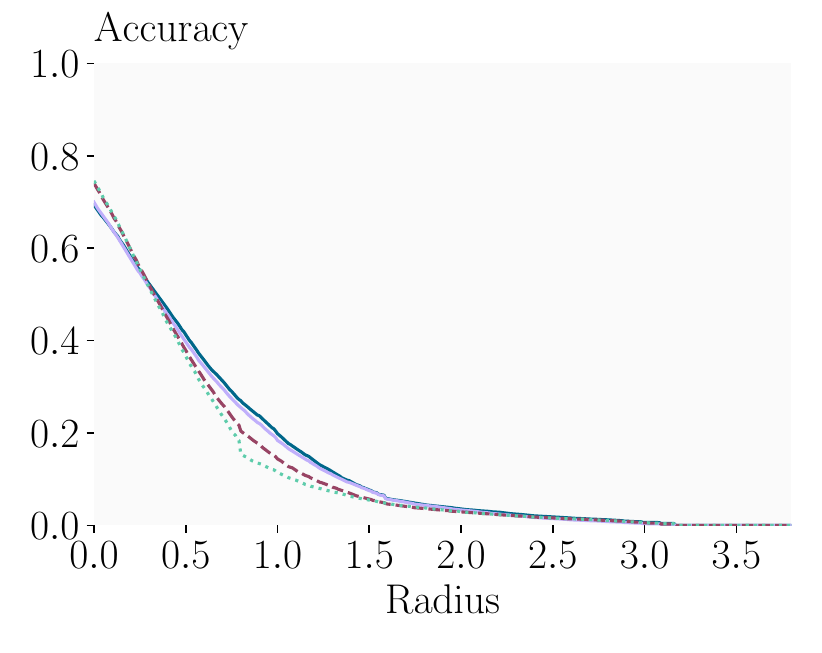}
        \vspace{-2mm}
        \caption{}
        \label{subfig:abla_loss}
    \end{subfigure}
    \vspace{-3mm}
    \caption{Comparison of dual RS models with different variance estimators.}
    \label{fig:abla_2radius}
    \vspace{-6.5mm}
\end{figure*}

In this section, we conduct an in-depth study on dual RS on \cifar. We use diffusion denoised smoothing as the classifier with off-the-shelf models and $\Sigma = \{0.25, 0.5, 1.0\}$.

To evaluate the performance of the variance estimation, we define $\Delta{R_c} := R_{c}^*(x)-R_c(x)$, where $R_{c}^*(x)$ is the maximum classification certified radius among all candidate noise variances for input $x$. This metric reflects how much $R_c$ is reduced due to the suboptimal variance estimation. \cref{subfig:rloss} shows the empirical cumulative distribution function (ECDF) of this metric for variance estimators trained with different loss functions and hyperparameters on \cifar. The intercept at $\Delta{R_c}=0$ indicates the proportion of samples for which the variance estimator predicts the optimal noise variance, and the area between the curve and the the perfect estimation (black dash line) reflects the overall loss in the certified radius due to suboptimal variance estimation. We observe that using soft cross-entropy (CE) loss instead of standard CE loss reduces the variance estimation accuracy, as it encourages the model to predict a suboptimal noise variance that yields a similar certified radius rather than the optimal one. However, fewer inputs are constrained significantly when using soft CE loss, as the curve is closer to the perfect estimation line when $\Delta R_c$ is large. Further, adding the consistency loss reduces the variance estimation accuracy in general, since it puts additional regularization on the robustness of the variance estimator.

Since the final certified radius is the minimum between the classification certified radius and the variance certified radius, the alignment between these two radii is of interest as well. We define $\Delta{R_{\sigma}} $ as $\Delta{R_{\sigma}} = R_{\sigma} - R_c$. A negative $\Delta{R_{\sigma}}$ means that the final radius is constrained by the $R_{\sigma}$, while a positive $\Delta{R_{\sigma}}$ means it is constrained by $R_c$. Ideally, we want $\Delta{R_{\sigma}}$ to be positive for as many samples as possible, so that the final certified radius is not constrained by $R_{\sigma}$. \cref{subfig:align} shows the ECDF of $\Delta{R_{\sigma}}$. The intercept at $\Delta{R_{\sigma}}=0$ indicates the proportion of samples constrained by $R_{\sigma}$. We observe that using soft CE loss decreases this ratio, and adding consistency loss further decreases it significantly. Furthermore, using the stronger version of the consistency weight, less samples are constrained by $R_{\sigma}$. This aligns with our intuition in the design.

As a reference, \cref{subfig:abla_loss} shows the accuracy-radius curves for these models. Using soft CE loss almost improves over the standard CE loss uniformly, while adding consistency loss slightly degrades the performance at small radii but improves it at large radii. Overall, the model trained with soft CE loss and consistency loss achieves the best accuracy-robustness trade-off. 

\vspace{-2mm}

\section{Conclusion}
\vspace{-2mm}
In this work, we address the fundamental trade-off between certified accuracy and certified radius in Randomized Smoothing (RS). We prove that RS remains valid under input-dependent noise variances, provided the variance is locally constant within the certified region. Building on this result, we introduce a dual RS framework, which achieves strong performance across both small and large radii, unattainable with fixed noise variance, while incurring modest computational overhead. Our method consistently outperforms prior input-dependent noise approaches across most radii. Further, the dual RS framework offers a novel routing perspective for certified robustness, enhancing the accuracy-robustness trade-off using off-the-shelf expert RS models.

\section{Reproducibility Statement}

We have made extensive efforts to ensure the reproducibility of our work. For theoretical results, we provide precise definitions and formal statements in \cref{sec:theory}, with complete proofs given in \cref{app:proofs}. For the proposed framework, detailed descriptions of the inference, certification, and training procedures are presented in \cref{sec:ars_frame}. Experimental settings, including architectures, hyperparameters, and training details, are reported in \cref{sec:exp} and \cref{app-exp}. To further support reproducibility, we include the link to our code and data in the abstract.

\section*{Acknowledgements}

This work has been done as part of the EU grant ELSA (European Lighthouse on Secure and Safe AI, grant agreement no. 101070617) and the SERI grant SAFEAI (Certified Safe, Fair and Robust Artificial Intelligence, contract no. MB22.00088). Views and opinions expressed are however those of the authors only and do not necessarily reflect those of the European Union or European Commission. Neither the European Union nor the European Commission can be held responsible for them. 

The work has received funding from the Swiss State Secretariat for Education, Research and Innovation (SERI).

\bibliography{iclr2026_conference}

@article{jeong2024multi,
  title={Multi-scale diffusion denoised smoothing},
  author={Jeong, Jongheon and Shin, Jinwoo},
  journal={Advances in Neural Information Processing Systems},
  volume={36},
  year={2024}
}

@inproceedings{CohenRK19,
 author = {Jeremy M. Cohen and
Elan Rosenfeld and
J. Zico Kolter},
 bibsource = {dblp computer science bibliography, https://dblp.org},
 booktitle = {Proc. of ICML},
 title = {Certified Adversarial Robustness via Randomized Smoothing},
 volume = {97},
 year = {2019}
}

@inproceedings{JeongS20,
  author       = {Jongheon Jeong and
                  Jinwoo Shin},
  title        = {Consistency Regularization for Certified Robustness of Smoothed Classifiers},
  booktitle    = {NeurIPS},
  year         = {2020}
}

@inproceedings{ZhaiDHZGRH020,
  author       = {Runtian Zhai and
                  Chen Dan and
                  Di He and
                  Huan Zhang and
                  Boqing Gong and
                  Pradeep Ravikumar and
                  Cho{-}Jui Hsieh and
                  Liwei Wang},
  title        = {{MACER:} Attack-free and Scalable Robust Training via Maximizing Certified
                  Radius},
  booktitle    = {{ICLR}},
  publisher    = {OpenReview.net},
  year         = {2020}
}

@inproceedings{JeongPKLKS21,
  author       = {Jongheon Jeong and
                  Sejun Park and
                  Minkyu Kim and
                  Heung{-}Chang Lee and
                  Do{-}Guk Kim and
                  Jinwoo Shin},
  title        = {SmoothMix: Training Confidence-calibrated Smoothed Classifiers for
                  Certified Robustness},
  booktitle    = {NeurIPS},
  pages        = {30153--30168},
  year         = {2021}
}

@inproceedings{JeongKS23,
  author       = {Jongheon Jeong and
                  Seojin Kim and
                  Jinwoo Shin},
  title        = {Confidence-Aware Training of Smoothed Classifiers for Certified Robustness},
  booktitle    = {{AAAI}},
  pages        = {8005--8013},
  publisher    = {{AAAI} Press},
  year         = {2023}
}

@inproceedings{SalmanLRZZBY19,
 author = {Hadi Salman and
Jerry Li and
Ilya P. Razenshteyn and
Pengchuan Zhang and
Huan Zhang and
S{\'{e}}bastien Bubeck and
Greg Yang},
 bibsource = {dblp computer science bibliography, https://dblp.org},
 booktitle = {Proc. of NeurIPS},
 title = {Provably Robust Deep Learning via Adversarially Trained Smoothed Classifiers},
 year = {2019}
}

@inproceedings{SalmanSYKK20,
  author       = {Hadi Salman and
                  Mingjie Sun and
                  Greg Yang and
                  Ashish Kapoor and
                  J. Zico Kolter},
  title        = {Denoised Smoothing: {A} Provable Defense for Pretrained Classifiers},
  booktitle    = {NeurIPS},
  year         = {2020}
}

@inproceedings{carlinicertified,
  title={(Certified!!) Adversarial Robustness for Free!},
  author={Carlini, Nicholas and Tramer, Florian and Dvijotham, Krishnamurthy Dj and Rice, Leslie and Sun, Mingjie and Kolter, J Zico},
  booktitle={The Eleventh International Conference on Learning Representations},
  year={2023}
}

@inproceedings{alfarra2022data,
  title={Data dependent randomized smoothing},
  author={Alfarra, Motasem and Bibi, Adel and Torr, Philip HS and Ghanem, Bernard},
  booktitle={Uncertainty in Artificial Intelligence},
  pages={64--74},
  year={2022},
  organization={PMLR}
}

@article{wang2021pretrain,
  title={Pretrain-to-finetune adversarial training via sample-wise randomized smoothing},
  author={Wang, Lei and Zhai, Runtian and He, Di and Wang, Liwei and Jian, Li},
  year={2021}
}

@inproceedings{sukenik2022intriguing,
  title={Intriguing Properties of Input-Dependent Randomized Smoothing},
  author={S{\'u}ken{\'i}k, Peter and Kuvshinov, Aleksei and G{\"u}nnemann, Stephan},
  booktitle={International Conference on Machine Learning},
  year={2022},
  organization={PMLR}
}

@inproceedings{horvathboosting,
  title={Boosting Randomized Smoothing with Variance Reduced Classifiers},
  author={Horv{\'a}th, Mikl{\'o}s Z and Mueller, Mark Niklas and Fischer, Marc and Vechev, Martin},
  booktitle={International Conference on Learning Representations},
  year={2022}
}

@article{lyu2024adaptive,
  title={Adaptive Randomized Smoothing: Certifying Multi-Step Defences against Adversarial Examples},
  author={Lyu, Saiyue and Shaikh, Shadab and Shpilevskiy, Frederick and Shelhamer, Evan and L{\'e}cuyer, Mathias},
  journal={arXiv e-prints},
  pages={arXiv--2406},
  year={2024}
}

@inproceedings{liu2021enhancing,
  title={Enhancing Certified Robustness via Smoothed Weighted Ensembling},
  author={Liu, Chizhou and Feng, Yunzhen and Wang, Ranran and Dong, Bin},
  booktitle={ICML 2021 Workshop on Adversarial Machine Learning},
  year={2021}
}

@inproceedings{ mueller2021boosting,
  title={Boosting Certified Robustness of Deep Networks via a Compositional Architecture},
  author={Mark Niklas Mueller and Mislav Balunović and Martin Vechev},
  booktitle={International Conference on Learning Representations},
  year={2021}
}

@inproceedings{yang2020randomized,
  title={Randomized smoothing of all shapes and sizes},
  author={Yang, Greg and Duan, Tony and Hu, J Edward and Salman, Hadi and Razenshteyn, Ilya and Li, Jerry},
  booktitle={International conference on machine learning},
  pages={10693--10705},
  year={2020},
  organization={PMLR}
}

@inproceedings{kumar2020curse,
  title={Curse of dimensionality on randomized smoothing for certifiable robustness},
  author={Kumar, Aounon and Levine, Alexander and Goldstein, Tom and Feizi, Soheil},
  booktitle={International Conference on Machine Learning},
  pages={5458--5467},
  year={2020},
  organization={PMLR}
}

@inproceedings{xia2024mitigating,
  title={Mitigating the Curse of Dimensionality for Certified Robustness via Dual Randomized Smoothing},
  author={Xia, Song and Yu, Yi and Jiang, Xudong and Ding, Henghui},
  booktitle={ICLR},
  year={2024}
}

@article{cullen2022double,
  title={Double bubble, toil and trouble: enhancing certified robustness through transitivity},
  author={Cullen, Andrew and Montague, Paul and Liu, Shijie and Erfani, Sarah and Rubinstein, Benjamin},
  journal={Advances in Neural Information Processing Systems},
  volume={35},
  pages={19099--19112},
  year={2022}
}

@inproceedings{li2022double,
  title={Double Sampling Randomized Smoothing},
  author={Li, Linyi and Zhang, Jiawei and Xie, Tao and Li, Bo},
  booktitle={International Conference on Machine Learning},
  pages={13163--13208},
  year={2022},
  organization={PMLR}
}

@inproceedings{nichol2021improved,
  title={Improved denoising diffusion probabilistic models},
  author={Nichol, Alexander Quinn and Dhariwal, Prafulla},
  booktitle={International conference on machine learning},
  pages={8162--8171},
  year={2021},
  organization={PMLR}
}

@inproceedings{dosovitskiy2020image,
  title={An Image is Worth 16x16 Words: Transformers for Image Recognition at Scale},
  author={Dosovitskiy, Alexey and Beyer, Lucas and Kolesnikov, Alexander and Weissenborn, Dirk and Zhai, Xiaohua and Unterthiner, Thomas and Dehghani, Mostafa and Minderer, Matthias and Heigold, G and Gelly, S and others},
  booktitle={International Conference on Learning Representations},
  year={2020}
}

@inproceedings{athalye2018obfuscated,
  title={Obfuscated gradients give a false sense of security: Circumventing defenses to adversarial examples},
  author={Athalye, Anish and Carlini, Nicholas and Wagner, David},
  booktitle={International conference on machine learning},
  pages={274--283},
  year={2018},
  organization={PMLR}
}

@inproceedings{Croce020a,
  author       = {Francesco Croce and
                  Matthias Hein},
  title        = {Reliable evaluation of adversarial robustness with an ensemble of
                  diverse parameter-free attacks},
  booktitle    = {{ICML}},
  series       = {Proceedings of Machine Learning Research},
  volume       = {119},
  pages        = {2206--2216},
  publisher    = {{PMLR}},
  year         = {2020}
}

@inproceedings{LecuyerAG0J19,
 author = {Mathias L{\'{e}}cuyer and Vaggelis Atlidakis and Roxana
Geambasu and Daniel Hsu and Suman Jana},
 bibsource = {dblp computer science bibliography, https://dblp.org},
 booktitle = {2019 {IEEE} Symposium on Security and Privacy, {SP} 2019,
San Francisco, CA, USA, May 19-23, 2019},
 doi = {10.1109/SP.2019.00044},
 title = {Certified Robustness to Adversarial Examples with
Differential Privacy},
 year = {2019}
}

@inproceedings{chenhao25acr,
  author       = {Chenhao Sun and Yuhao Mao and Mark Niklas M{\"{u}}ller  and Martin T. Vechev},
  title        = {Average Certified Radius is a Poor Metric for Randomized Smoothing},
  year         = {2025},
  booktitle    = {The Forty-Second International Conference on Machine Learning}
}

@article{gowal2018effectiveness,
  title={On the effectiveness of interval bound propagation for training verifiably robust models},
  author={Gowal, Sven and Dvijotham, Krishnamurthy and Stanforth, Robert and Bunel, Rudy and Qin, Chongli and Uesato, Jonathan and Arandjelovic, Relja and Mann, Timothy and Kohli, Pushmeet},
  journal={arXiv preprint arXiv:1810.12715},
  year={2018}
}

@article{SinghGPV19,
  author    = {Gagandeep Singh and
               Timon Gehr and
               Markus P{\"{u}}schel and
               Martin T. Vechev},
  bibsource = {dblp computer science bibliography, https://dblp.org},
  doi       = {10.1145/3290354},
  journal   = {Proc. {ACM} Program. Lang.},
  number    = {{POPL}},
  title     = {An abstract domain for certifying neural networks},
  volume    = {3},
  year      = {2019}
}

@inproceedings{WongK18,
  author    = {Eric Wong and
               J. Zico Kolter},
  bibsource = {dblp computer science bibliography, https://dblp.org},
  booktitle = {Proc. of ICML},
  title     = {Provable Defenses against Adversarial Examples via the Convex Outer
               Adversarial Polytope},
  volume    = {80},
  year      = {2018}
}

@article{MullerMSPV22,
  author    = {Mark Niklas M{\"{u}}ller and
               Gleb Makarchuk and
               Gagandeep Singh and
               Markus P{\"{u}}schel and
               Martin T. Vechev},
  bibsource = {dblp computer science bibliography, https://dblp.org},
  doi       = {10.1145/3498704},
  journal   = {Proc. {ACM} Program. Lang.},
  number    = {{POPL}},
  title     = {{PRIMA:} general and precise neural network certification via scalable
               convex hull approximations},
  volume    = {6},
  year      = {2022}
}

@article{ShiJKZJH24,
  author       = {Zhouxing Shi and
                  Qirui Jin and
                  Zico Kolter and
                  Suman Jana and
                  Cho{-}Jui Hsieh and
                  Huan Zhang},
  title        = {Neural Network Verification with Branch-and-Bound for General Nonlinearities},
  journal      = {CoRR},
  volume       = {abs/2405.21063},
  year         = {2024},
  url          = {https://doi.org/10.48550/arXiv.2405.21063},
  doi          = {10.48550/ARXIV.2405.21063},
  eprinttype    = {arXiv},
  eprint       = {2405.21063},
}

@inproceedings{mirman2018differentiable,
  title={Differentiable abstract interpretation for provably robust neural networks},
  author={Mirman, Matthew and Gehr, Timon and Vechev, Martin},
  booktitle={International Conference on Machine Learning},
  pages={3578--3586},
  year={2018},
  organization={PMLR}
}

@inproceedings{muellercertified,
  title={Certified Training: Small Boxes are All You Need},
  author={Mueller, Mark Niklas and Eckert, Franziska and Fischer, Marc and Vechev, Martin},
  booktitle={The Eleventh International Conference on Learning Representations},
  year={2023}
}

@article{shi2021fast,
  title={Fast certified robust training with short warmup},
  author={Shi, Zhouxing and Wang, Yihan and Zhang, Huan and Yi, Jinfeng and Hsieh, Cho-Jui},
  journal={Advances in Neural Information Processing Systems},
  volume={34},
  pages={18335--18349},
  year={2021}
}

@inproceedings{de2024expressive,
  title={Expressive Losses for Verified Robustness via Convex Combinations},
  author={De Palma, Alessandro and Bunel, Rudy and Dvijotham, Krishnamurthy and Kumar, M Pawan and Stanforth, Robert and Lomuscio, Alessio},
  booktitle={ICLR 2024-International Conference on Learning Representations},
  year={2024}
}

@article{balauca2024overcoming,
  author        = {Stefan Balauca and Mark Niklas M{\"u}ller and Yuhao Mao and Maximilian Baader and Marc Fischer and Martin Vechev},
  journal     = {Trans. Mach. Learn. Res.},
  title         = {Gaussian Loss Smoothing Enables Certified Training with Tight Convex Relaxations},
  year          = {2025}
}

@article{varangotreille2025doinglesssurveyrouting,
      title={Doing More with Less: A Survey on Routing Strategies for Resource Optimisation in Large Language Model-Based Systems}, 
      author={Clovis Varangot-Reille and Christophe Bouvard and Antoine Gourru and Mathieu Ciancone and Marion Schaeffer and François Jacquenet},
      year={2025},
      eprint={2502.00409},
      archivePrefix={arXiv},
}

@article{mao2023connecting,
  title={Connecting certified and adversarial training},
  author={Mao, Yuhao and M{\"u}ller, Mark and Fischer, Marc and Vechev, Martin},
  journal={Advances in Neural Information Processing Systems},
  volume={36},
  pages={73422--73440},
  year={2023}
}

@inproceedings{baader2024expressivity,
  author    = {Maximilian Baader and Mark Niklas Mueller and Yuhao Mao and Martin Vechev},
  booktitle = {The Twelfth International Conference on Learning Representations},
  title     = {Expressivity of Re{LU}-Networks under Convex Relaxations},
  year      = {2024}
}

@inproceedings{mao2026expressiveness,
  title={Expressiveness of Multi-Neuron Convex Relaxations in Neural Network Certification},
  author={Yuhao Mao and Yani Zhang and Martin Vechev},
  booktitle={The Fourteenth International Conference on Learning Representations},
  year={2026},
}

@inproceedings{maoctbench,
  title={CTBench: A Library and Benchmark for Certified Training},
  author={Mao, Yuhao and Balauca, Stefan and Vechev, Martin},
  booktitle={Forty-second International Conference on Machine Learning},
  year={2025}
}

@inproceedings{mao24understanding,
  title={Understanding Certified Training with Interval Bound Propagation},
  author={Mao, Yuhao and Mueller, Mark Niklas and Fischer, Marc and Vechev, Martin},
  booktitle={The Twelfth International Conference on Learning Representations},
  year={2024}
}

@inproceedings{xiao2023densepure,
  title={Densepure: Understanding diffusion models for adversarial robustness},
  author={Xiao, Chaowei and Chen, Zhongzhu and Jin, Kun and Wang, Jiongxiao and Nie, Weili and Liu, Mingyan and Anandkumar, Anima and Li, Bo and Song, Dawn},
  booktitle={The Eleventh International Conference on Learning Representations},
  year={2023}
}

@inproceedings{zhang2023diffsmooth,
  title={$\{$DiffSmooth$\}$: Certifiably robust learning via diffusion models and local smoothing},
  author={Zhang, Jiawei and Chen, Zhongzhu and Zhang, Huan and Xiao, Chaowei and Li, Bo},
  booktitle={32nd USENIX Security Symposium (USENIX Security 23)},
  pages={4787--4804},
  year={2023}
}

@inproceedings{he2016deep,
  title={Deep residual learning for image recognition},
  author={He, Kaiming and Zhang, Xiangyu and Ren, Shaoqing and Sun, Jian},
  booktitle={Proceedings of the IEEE conference on computer vision and pattern recognition},
  pages={770--778},
  year={2016}
}

@article{dhariwal2021diffusion,
  title={Diffusion models beat gans on image synthesis},
  author={Dhariwal, Prafulla and Nichol, Alexander},
  journal={Advances in neural information processing systems},
  volume={34},
  pages={8780--8794},
  year={2021}
}

@article{bao2021beit,
  title={Beit: Bert pre-training of image transformers},
  author={Bao, Hangbo and Dong, Li and Piao, Songhao and Wei, Furu},
  journal={arXiv preprint arXiv:2106.08254},
  year={2021}
}
\bibliographystyle{iclr2026_conference}

\newpage
\appendix
\section{Usage of Large Language Models}

We used a large language model (GPT-5) solely to assist with polishing and grammar correction of the paper. The LLM was not involved in other aspects of this paper.

\begin{table}[]
    \centering
    \caption{GPU hours on a single NVIDIA RTX 4090 for the main components of our training pipeline. The costs of building the optimal-variance dataset, training variances estimator and finetuning the classifier are reported as (number of parallel GPUs $\times$ wall-clock time in hours).}
    \label{tab:training_cost}
    \resizebox{.65\linewidth}{!}{
    \begin{tabular}{ccc}
        \toprule
         Component   & \cifar & \IN  \\ \midrule
        Building optimal variances dataset & 1 $\times$ 6.0  &  128 $\times$ 42.2  \\ 
        Training variance estimator        & 1 $\times$ 2.5  &  8 $\times$ 63   \\ 
        Generating estimated variances     & 1.5  &  9.9  \\ 
        Finetuning the classifier         & 1 $\times$ 1.0   &  8 $\times$ 0.7\\ \bottomrule
    \end{tabular}
    }
\end{table}

\section{Deferred Proofs} \label{app:proofs}

\subsection{Proof of \cref{thm:adaptive-RS-deterministic}}\label{app:proof-of-adaptive-RS-deterministic}

We first cite the following lemma from \citet{SalmanLRZZBY19}, using the formulation as in Lemma D.1 of \citet{jeong2024multi}. Note that we adapt the notation to be consistent with our paper. We slightly abuse the notation and let $p_\sigma$ be the probability of a certain class, i.e., $p_\sigma(\vx):= \Prob_{\bm{\delta} \sim \gN(\bm{0}, \sigma^2 \bm{I})}(f(\vx + \bm{\delta}) = y)$ for some $y$.

\begin{restatable}{lem}{RSlemma}\label{lem:RS}
    $h_y(\vx):= \sigma \Phi^{-1}(p_\sigma(\vx))$ is $1$-Lipschitz with respect to the $\ell_2$ norm.
\end{restatable}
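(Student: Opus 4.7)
The plan is to establish the 1-Lipschitz property of $h_y$ via a pointwise gradient bound. First I would express $p_\sigma$ as a convolution: letting $A := f_c^{-1}(y) \subseteq \gX$ and $\phi_\sigma$ the density of $\gN(\bm{0}, \sigma^2\bm{I})$, we have $p_\sigma(\vx) = \int_A \phi_\sigma(\vz - \vx)\, d\vz$. Differentiating under the integral sign (justified by dominated convergence using Gaussian tail bounds), and observing that $\nabla_\vx \phi_\sigma(\vz - \vx) = \sigma^{-2}(\vz - \vx)\, \phi_\sigma(\vz - \vx)$, yields $\nabla p_\sigma(\vx) = \sigma^{-2}\, \E_{\bm{\delta}}[\mathbf{1}_A(\vx + \bm{\delta})\, \bm{\delta}]$ with $\bm{\delta} \sim \gN(\bm{0}, \sigma^2\bm{I})$.

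Next I would bound $\|\nabla p_\sigma(\vx)\|_2$ via its variational form $\sup_{\|\vu\|_2 = 1}\langle \vu, \nabla p_\sigma(\vx)\rangle$. For a fixed unit vector $\vu$, define $Z := \langle \vu, \bm{\delta}\rangle/\sigma \sim \gN(0,1)$ and $Y := \mathbf{1}_A(\vx + \bm{\delta})$, which is $\{0,1\}$-valued with mean $p := p_\sigma(\vx)$. The target quantity then becomes $\sigma^{-1}\, \E[YZ]$. A short ``bang-bang'' argument (greedily assigning the unit mass of $Y$ to the largest values of $Z$) shows that $\E[YZ]$ is maximized over all $\{0,1\}$-valued $Y$ with $\E[Y] = p$ by the threshold indicator $Y^\star = \mathbf{1}\{Z \geq \Phi^{-1}(1 - p)\}$, whose value is $\E[Y^\star Z] = \phi(\Phi^{-1}(p))$, where $\phi = \Phi'$ is the standard Gaussian density. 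Hence $\|\nabla p_\sigma(\vx)\|_2 \leq \phi(\Phi^{-1}(p_\sigma(\vx)))/\sigma$.

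Applying the chain rule to $h_y(\vx) = \sigma\, \Phi^{-1}(p_\sigma(\vx))$ gives
\begin{equation*}
\nabla h_y(\vx) \;=\; \frac{\sigma}{\phi(\Phi^{-1}(p_\sigma(\vx)))}\, \nabla p_\sigma(\vx),
\end{equation*}
whose norm is therefore bounded by $1$ wherever $p_\sigma(\vx) \in (0, 1)$. Integrating along the straight-line segment between any $\vx_0, \vx_1 \in \gX$ then yields $|h_y(\vx_1) - h_y(\vx_0)| \leq \|\vx_1 - \vx_0\|_2$, which is the claim.

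I expect the main technical hurdle to be the rearrangement step, i.e., establishing that the threshold rule $Y^\star$ is optimal; I would verify this by an exchange argument (transferring mass from a location where $Y = 1$ and $Z$ is small to one where $Y = 0$ and $Z$ is large can only increase $\E[YZ]$, forcing the optimum to be a threshold at level $\Phi^{-1}(1-p)$). A secondary subtlety is the boundary behavior at $p_\sigma(\vx) \in \{0,1\}$, where $\Phi^{-1}$ diverges; this can be handled by extending $h_y$ continuously to $\pm\infty$ and using a limiting argument (approximating $A$ by slightly enlarged/shrunken sets). An equivalent, cleaner route I could fall back on is Neyman--Pearson applied directly to the pair $\gN(\vx_0, \sigma^2\bm{I})$, $\gN(\vx_1, \sigma^2\bm{I})$, which gives $p_\sigma(\vx_1) \leq \Phi(\Phi^{-1}(p_\sigma(\vx_0)) + \|\vx_1 - \vx_0\|_2/\sigma)$ directly, sidestepping the boundary issue; the Lipschitz bound then follows from symmetry in $\vx_0, \vx_1$.
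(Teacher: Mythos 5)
Your proof is correct, but note that the paper does not actually prove \cref{lem:RS}: it imports the result from \citet{SalmanLRZZBY19} (in the formulation of Lemma~D.1 of \citet{jeong2024multi}) and only proves the downstream extension to locally constant $\sigma$ (\cref{lem:adaptive-RS}). What you have written is, in substance, the original argument of Salman et al.: write $p_\sigma$ as the Gaussian convolution of the indicator of $A=f_c^{-1}(y)$, bound the directional derivative by observing that among $\{0,1\}$-valued $Y$ with $\E[Y]=p$ the correlation $\E[YZ]$ with a standard Gaussian $Z$ is maximized by the threshold rule (your exchange argument is exactly the Neyman--Pearson optimality of likelihood-ratio tests), which gives $\|\nabla p_\sigma(\vx)\|_2\le \phi(\Phi^{-1}(p_\sigma(\vx)))/\sigma$, and then cancel this factor through the chain rule applied to $\sigma\Phi^{-1}(\cdot)$. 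All steps check out, including your handling of the boundary cases $p_\sigma\in\{0,1\}$; in fact these occur only when $A$ or its complement is Gaussian-null, in which case $p_\sigma$ is constant and the claim is vacuous, so the limiting argument can be dispensed with. The fallback you mention, deriving $p_\sigma(\vx_1)\le\Phi\left(\Phi^{-1}(p_\sigma(\vx_0))+\|\vx_1-\vx_0\|_2/\sigma\right)$ directly from Neyman--Pearson for the two shifted Gaussians and symmetrizing, is equally valid and is the cleaner route if one prefers to avoid differentiating under the integral sign.
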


\cref{lem:RS} can be extended to locally constant $\sigma(\vx)$ as follows.

\begin{restatable}{lem}{adaptiveRSlemma}\label{lem:adaptive-RS}
    Let $\gX$ be partitioned into non-overlapping subsets $\bigcup_{i \in I} \gX_i \subseteq \gX$, and assume $\sigma(\vx)$ is constant within each $\gX_i$. Let $h_y(\vx):= \sigma(\vx) \Phi^{-1}(p_{\sigma(\vx)}(\vx))$. Then for all $i \in I$, $\forall \vx_1, \vx_2 \in \gX_i$, we have $|h_y(\vx_1) - h_y(\vx_2)| \le \|\vx_1 - \vx_2\|_2$.
\end{restatable}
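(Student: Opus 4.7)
The plan is to reduce the claim to a direct application of \cref{lem:RS} on each partition cell. Fix an index $i \in I$ and let $\sigma_i$ denote the common value of $\sigma(\vx)$ on $\gX_i$. For any $\vx_1, \vx_2 \in \gX_i$, both evaluations of $h_y$ use the same noise level, so $h_y(\vx_1) = \sigma_i \Phi^{-1}(p_{\sigma_i}(\vx_1))$ and $h_y(\vx_2) = \sigma_i \Phi^{-1}(p_{\sigma_i}(\vx_2))$. Thus on $\gX_i$, the function $h_y$ coincides exactly with the globally-constant-$\sigma$ function appearing in \cref{lem:RS} with noise level $\sigma_i$.

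Next I would invoke \cref{lem:RS} with this fixed $\sigma_i$ to conclude that the map $\vx \mapsto \sigma_i \Phi^{-1}(p_{\sigma_i}(\vx))$ is $1$-Lipschitz with respect to the $\ell_2$ norm on all of $\gX$. Restricting this inequality to the two points $\vx_1, \vx_2 \in \gX_i$ immediately yields
\begin{equation*}
|h_y(\vx_1) - h_y(\vx_2)| = |\sigma_i \Phi^{-1}(p_{\sigma_i}(\vx_1)) - \sigma_i \Phi^{-1}(p_{\sigma_i}(\vx_2))| \le \|\vx_1 - \vx_2\|_2.
\end{equation*}
Since $i$ was arbitrary, this gives the claim for every cell $\gX_i$.

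There is essentially no obstacle here: the work has already been done in \cref{lem:RS}, and the only thing the partition hypothesis buys us is the ability to collapse $\sigma(\vx_1)$ and $\sigma(\vx_2)$ into a single constant $\sigma_i$ when both inputs lie in the same cell. The mild subtlety to flag is that the Lipschitz bound is purely \emph{local} to each $\gX_i$; we do not and cannot compare $h_y(\vx_1)$ with $h_y(\vx_2)$ when $\vx_1, \vx_2$ lie in different cells, because then the definition of $h_y$ involves different noise variances and \cref{lem:RS} no longer applies directly. This is exactly why \cref{thm:adaptive-RS-deterministic} must subsequently intersect the certified ball $\sB(\vx_0, R(\vx_0, \sigma(\vx_0)))$ with the local-constancy region $\sB(\vx_0, R_\sigma)$ when lifting this lemma to a robustness statement.
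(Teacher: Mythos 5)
Your proof is correct and is essentially identical to the paper's: both fix a cell $\gX_i$, collapse $\sigma(\vx)$ to the constant $\sigma_i$ so that $h_y$ restricted to $\gX_i$ coincides with the fixed-$\sigma$ function of \cref{lem:RS}, and then apply that lemma's $1$-Lipschitz bound to the pair $\vx_1, \vx_2$. Your closing remark about the bound being local to each cell is a correct observation but does not change the argument.
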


\begin{proof}
    For any $i \in I$, let $\sigma_i$ be the constant value of $\sigma(\vx)$ for $\vx \in \gX_i$. Then for any $\vx_1, \vx_2 \in \gX_i$, we have
    \begin{align*}
        |h_y(\vx_1) - h_y(\vx_2)| &= |\sigma_i \Phi^{-1}(p_{\sigma_i}(\vx_1)) - \sigma_i \Phi^{-1}(p_{\sigma_i}(\vx_2))| \\
        &\le \|\vx_1 - \vx_2\|_2,
    \end{align*}
    where the last inequality follows from \cref{lem:RS}.
\end{proof}

Now we are ready to prove \cref{thm:adaptive-RS-deterministic}, restated below for convenience.

\adaptiveRSdeterminisitc*

\begin{proof}
    Let $\gX_i = \{ \vx | \sigma(\vx) = \sigma_i \}$, where $\sigma_i$ are distinct values taken by $\sigma(\vx)$. Then $\gX = \bigcup_{i \in I} \gX_i$ is a partition of $\gX$ into non-overlapping subsets. By \cref{lem:adaptive-RS}, for any $i \in I$, $\forall \vx_1, \vx_2 \in \gX_i$, we have $|h_y(\vx_1) - h_y(\vx_2)| \le \|\vx_1 - \vx_2\|_2$. Further, given $\vx$, there exists exactly one $j \in I$ such that $\vx \in \gX_j$. This implies $\sB(\vx_0, R_\sigma) \subseteq \gX_j$. If there is no adversarial perturbation $\bm{\delta}$ such that $\|\bm{\delta}\|_2 \le R_\sigma$ and $g_c(\vx_0 + \bm{\delta}) \ne g_c(\vx_0)$, then the claim holds trivially. In the following, we consider the case where such adversarial perturbation $\bm{\delta}$ exists.

    Given the smoothing distribution $\gN(\bm{0}, \sigma^2 \bm{I})$ where $\sigma = \sigma_j$, let $A$ be the most likely class at $\vx_0$, and $B$ be any other class. Let $p_A(\vx)$ be the probability of class $A$ at $\vx$ under the smoothing distribution, and $p_B(\vx)$ be the probability of class $B$ at $\vx$. 
    Therefore, $\forall \vx_0 + \bm{\delta} \in \gX_i$, we have $\sigma \Phi^{-1}(p_A(\vx_0)) - \sigma \Phi^{-1}(p_A(\vx_0 + \bm{\delta})) = h_A(\vx_0) - h_A(\vx_0 +\delta) \le \|\delta\|_2$. Let $\bm{\delta}$ be an adversarial perturbation such that $\|\bm{\delta}\|_2 \le R_\sigma$ and let $B$ be the most likely class at $\vx_0 + \bm{\delta}$. Then, since $p_A(\vx_0 + \bm{\delta}) \le p_B(\vx_0 + \bm{\delta})$ and $\Phi^{-1}(t)$ is monotonically increasing in $t$, we have 
    \begin{align*}
        \sigma \Phi^{-1}(p_A(\vx_0)) - \sigma \Phi^{-1}(p_B(\vx_0 + \bm{\delta})) &\le \sigma \Phi^{-1}(p_A(\vx_0)) - \sigma \Phi^{-1}(p_A(\vx_0 + \bm{\delta})) \\
        &\le \|\bm{\delta}\|_2.
    \end{align*}
    Further, applying \cref{lem:adaptive-RS} again gives $\sigma \Phi^{-1}(p_B(\vx_0 + \bm{\delta})) - \sigma \Phi^{-1}(p_B(\vx_0)) = h_B(\vx_0 + \bm{\delta}) - h_B(\vx_0) \le \|\bm{\delta}\|_2$. Combining the two inequalities gives
    $$
    \sigma \Phi^{-1}(p_A(\vx_0)) - \sigma \Phi^{-1}(p_B(\vx_0)) \le 2\|\bm{\delta}\|_2.
    $$
    Thus, we have
    \begin{align*}
        \|\bm{\delta}\|_2 &\ge \frac{\sigma}{2} \left( \Phi^{-1}(p_A(\vx_0)) - \Phi^{-1}(p_B(\vx_0)) \right) \\
        &\ge \frac{\sigma}{2} \left( \Phi^{-1}(p_A(\vx_0)) - \Phi^{-1}(1 - p_A(\vx_0)) \right) \\
        &= \sigma \Phi^{-1}(p_A(\vx_0)) \\
        &= R(\vx, \sigma_j) \\
        &= R(\vx, \sigma(\vx_0)).
    \end{align*}
    This completes the proof.
\end{proof}

\subsection{Proof of \cref{thm:adaptive-RS-probabilistic}}\label{app:proof-of-adaptive-RS-probabilistic}

We restate \cref{thm:adaptive-RS-probabilistic} below for convenience.

\adaptiveRSprobabilistic*

\begin{proof}
    Let $F_1$ be the event that $g_c(\vx_0 + \bm{\delta}) \ne g_c(\vx_0)$ for some $\bm{\delta}$ such that $\|\bm{\delta}\|_2 \le R_c$. Let $F_2$ be the event that $\sigma(\vx_0 +\bm{\delta}) \ne \sigma(\vx_0)$ for some $\bm{\delta}$ such that $\|\bm{\delta}\|_2 \le R_c$. Then we have $\Prob(F_1) \le \alpha$ and $\Prob(F_2) \le \beta$ by the assumption. Let $F = F_1 \cup F_2$. Then we have $\Prob(F) \le \Prob(F_1) + \Prob(F_2) \le \alpha + \beta$, where the first inequality follows from the union bound. Applying \cref{thm:adaptive-RS-deterministic}, the complement of $F$ implies that $g_c(\vx_0 + \bm{\delta}) = g_c(\vx_0)$ for all $\bm{\delta}$ such that $\|\bm{\delta}\|_2 \le \min(R_c, R_\sigma)$. The result follows.
\end{proof}

\section{Experiment Details}\label{app-exp}

\subsection{Experiment Setup}
\paragraph{\cifar} In the main experiments, the variance estimator model is trained from scratch for 90 epochs with a batch size of 256. We use the AdamW optimizer with an initial learning rate of 0.01 and a weight decay of 0.01. The learning rate is decayed by a factor of 0.5 every 30 epochs. Unless otherwise stated, we set $\lambda = 40$ and $\eta = 0.5$ for the consistency loss, and use $\sigma_e = 1.0$ for variance estimation certification. To compute the consistency loss, we always use two noise samples ($m=2$) following \citet{JeongS20}. For classifier finetuning, we apply the Cross-Entropy loss on denoised images $\texttt{denoise}(\vx + \bm{\delta})$. The classifier is finetuned for 15 epochs with a batch size of 128 using AdamW with a learning rate of $2\times 10^{-5}$ and a weight decay of 0.01.
\paragraph{\IN} The variance estimator model is trained from scratch for 9 epochs with a batch size of 200. We use the AdamW optimizer with an initial learning rate of 0.005 and a weight decay of 0.01. The learning rate is decayed by a factor of 0.5 every 3 epochs. Unless otherwise stated, we set $\lambda = 10$ and $\eta = 0.5$ for the consistency loss, and use $\sigma_e = 1.0$ for variance estimation certification. For the classifier finetuning, we randomly choose 2\% of the training set to finetune the classifier for 1 epoch with a batch size of 32 using AdamW with a learning rate of $2\times 10^{-5}$ and a weight decay of 0.01. On \IN, after finetuning the classifier, we do not retrian the variance estimator due to the high computational cost.

\subsection{Training Cost}
We report the computational cost of training our dual RS framework. All experiments are conducted on NVIDIA RTX 4090 GPUs.Table~\ref{tab:training_cost} summarizes the GPU hours required for each major component of the training pipeline. The highest cost arises from constructing the optimal variances dataset, which involves performing classification certification on the full dataset under all three noise variances. For the \cifar main experiments, two variance estimators are trained and one classifier finetuning is performed, resulting in approximately 19.5 GPU hours on a single RTX 4090. 

In practice on \IN, we parallelized the dataset construction step across 128 GPUs. The overall training pipeline requires approximately 115.8 hours.

\section{Numerical Examples for the Confidence Penalty}\label{app:deferred-tables-figures}

We list numerical examples of the confidence penalty $\beta$ under different uncertainty levels in \cref{tab:beta_cost}.

\section{Additional Studies} \label{app:additional-studies}

In this section, we present additional studies to further investigate different components of our dual RS framework on \cifar. 

\subsection{Ablation on Consistency Loss Hyperparameter $\lambda$}

\begin{table}
    \centering
    \caption{Numerical examples of the confidence penalty $\beta$ under the given uncertainty level. We assume large enough $R_\sigma$, such that the final certified radius equals $R_c$. When $\alpha:\beta=1:0$, it matches the standard RS setting. The budget is fixed to $N=10^5$, $\sigma$ is fixed to 1.0 and $\alpha+\beta$ is fixed to $0.001$, following the standard certification setting.} \label{tab:beta_cost}
    \resizebox{.3\linewidth}{!}{
    \begin{tabular}{ccc}
    \toprule
    $\alpha:\beta$ & $\hat{p_\sigma}$ & certified radius \\ \midrule
    $1:0$       & $0.99$   &  2.2900             \\
    $1:1$       & $0.99$   &  2.2877             \\ 
    $1:4$       & $0.99$   &  2.2848             \\ 
    \cmidrule{1-3}
    $1:0$       & $0.8$   &    0.8277           \\
    $1:1$       & $0.8$   &    0.8267          \\
    $1:4$       & $0.8$   &    0.8256           \\ 
    \cmidrule{1-3}
    $1:0$       & $0.6$   &    0.2409           \\
    $1:1$       & $0.6$   &    0.2401           \\
    $1:4$       & $0.6$   &    0.2391           \\
    \bottomrule
    \end{tabular}
    }
\end{table}

\begin{figure*}[]
    \centering
    \begin{subfigure}[t]{.32\linewidth}
        \centering
        \includegraphics[width=.95\linewidth]{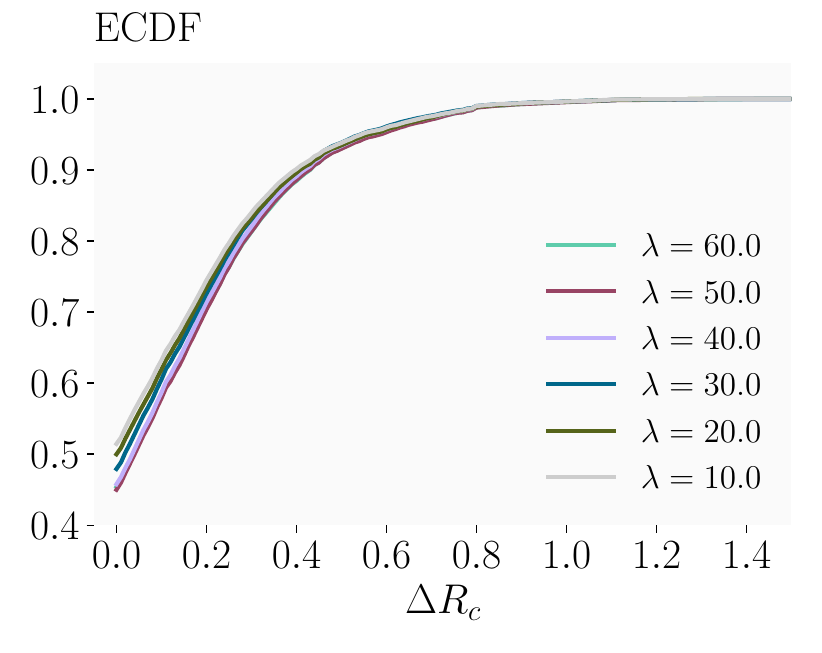}
        \caption{}
        \label{subfig:abl_rloss}
    \end{subfigure}
    \begin{subfigure}[t]{.32\linewidth}
        \centering
        \includegraphics[width=.95\linewidth]{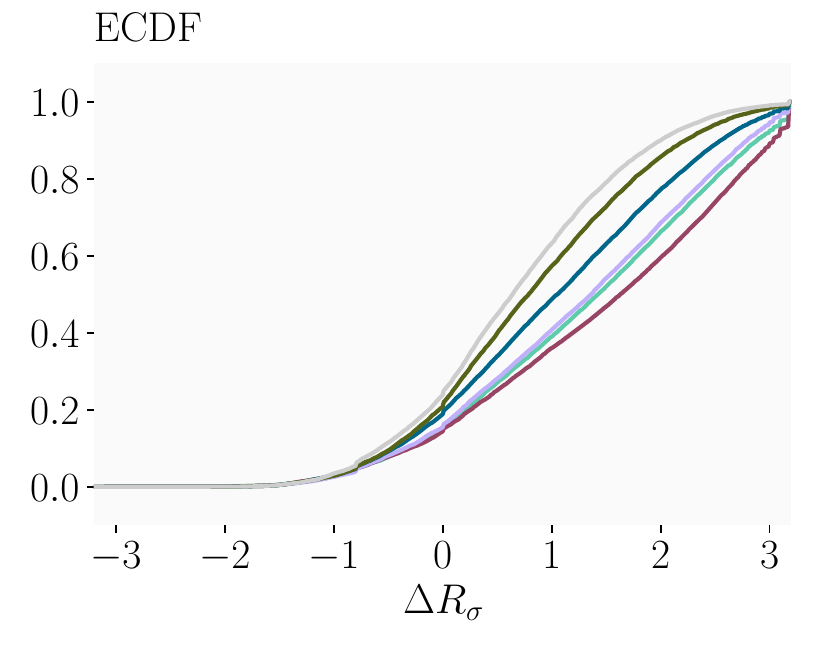}
        \caption{}
        \label{subfig:abl_align}
    \end{subfigure}
    \begin{subfigure}[t]{.32\linewidth}
        \centering
        \includegraphics[width=.95\linewidth]{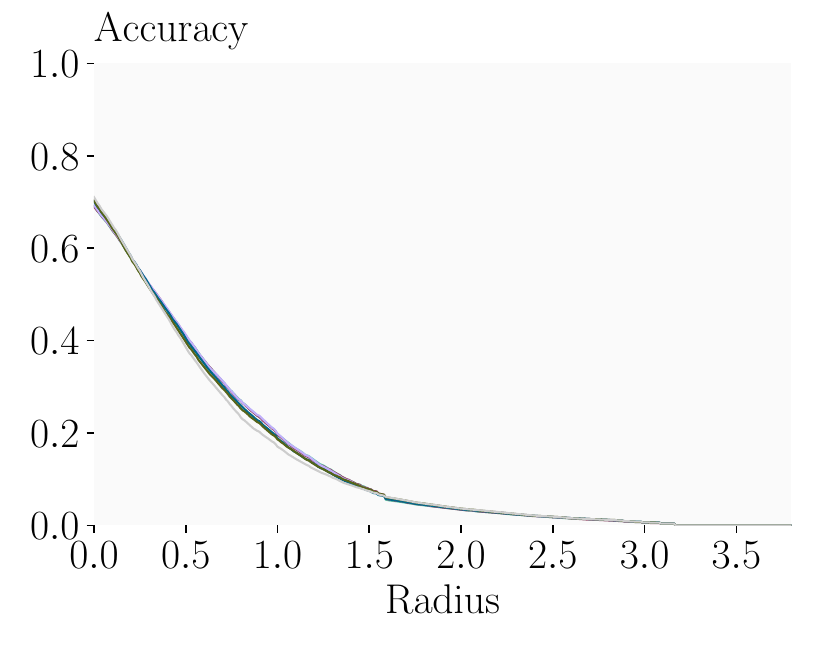}
        \caption{}
        \label{subfig:abl_abla_loss}
    \end{subfigure}
    \caption{Abaltion study on $\lambda$ in the consistency loss. The figures are organized in the same way as \cref{fig:abla_2radius}.}
    \label{fig:abla_lbd}
    \vspace{-4mm}
\end{figure*}

\begin{figure*}[]
    \centering
    \includegraphics[width=.45\linewidth]{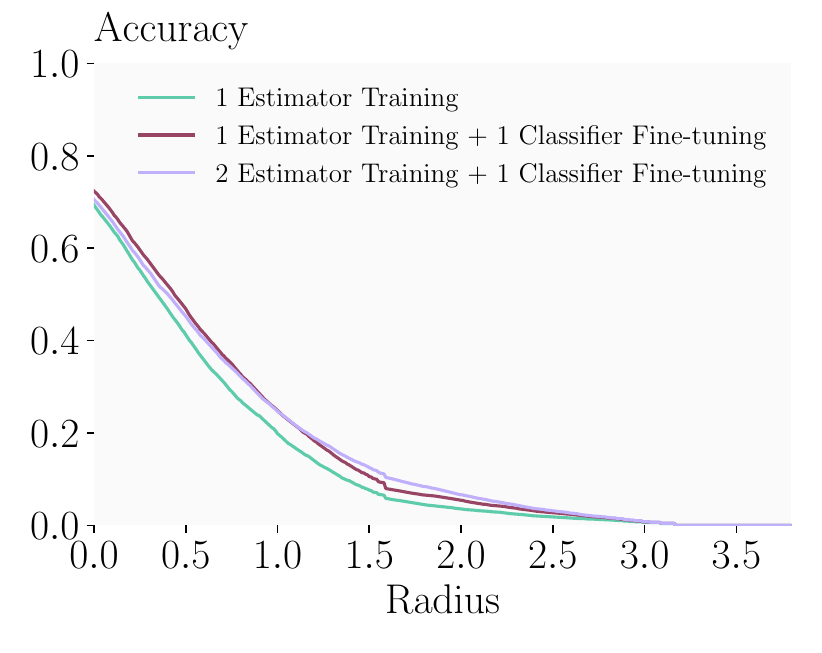}
    \caption{Effect of training rounds for the variance estimator and the classifier. \emph{1 Estimator Training} trains the variance estimator using the off-the-shelf classifier. \emph{1 Estimator Training + 1 Classifier Finetuning} finetunes the classifier using the estimated variances by the trained variance estimator. \emph{2 Estimator Training + 1 Classifier Finetuning} further re-train the variance estimator based on the finetuned classifier.}
    \label{fig:abla_ft}
    \vspace{-2mm}
\end{figure*}

\begin{figure*}[]
    \centering
    \includegraphics[width=.45\linewidth]{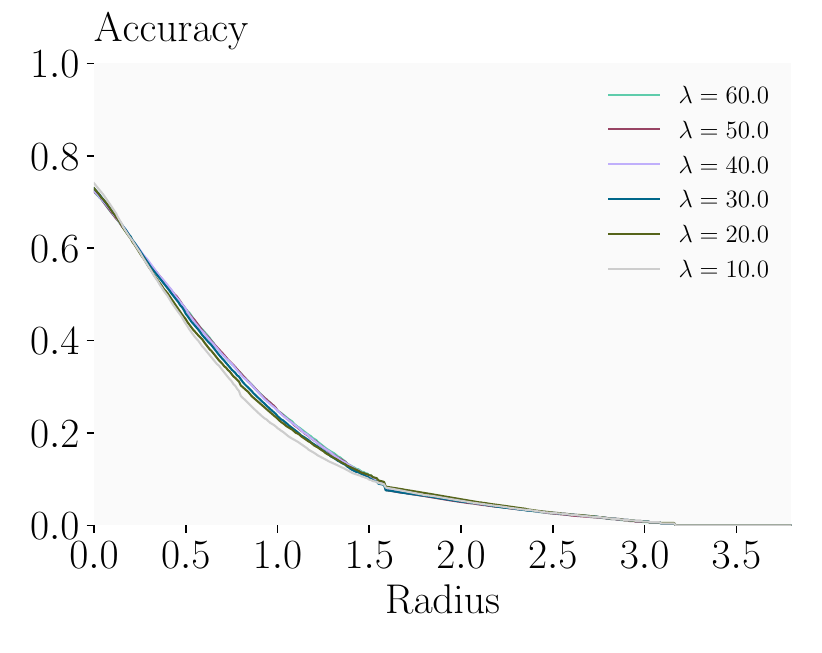}
    \caption{Accuracy–$R_{\text{final}}$ curves after \emph{1 Estimator Training + 1 Classifier Finetuning} with different $\lambda$ in the consistency loss.}
    \label{fig:abla_lbd_ft}
    \vspace{-2mm}
\end{figure*}

\begin{figure*}[]
    \centering
    \includegraphics[width=.45\linewidth]{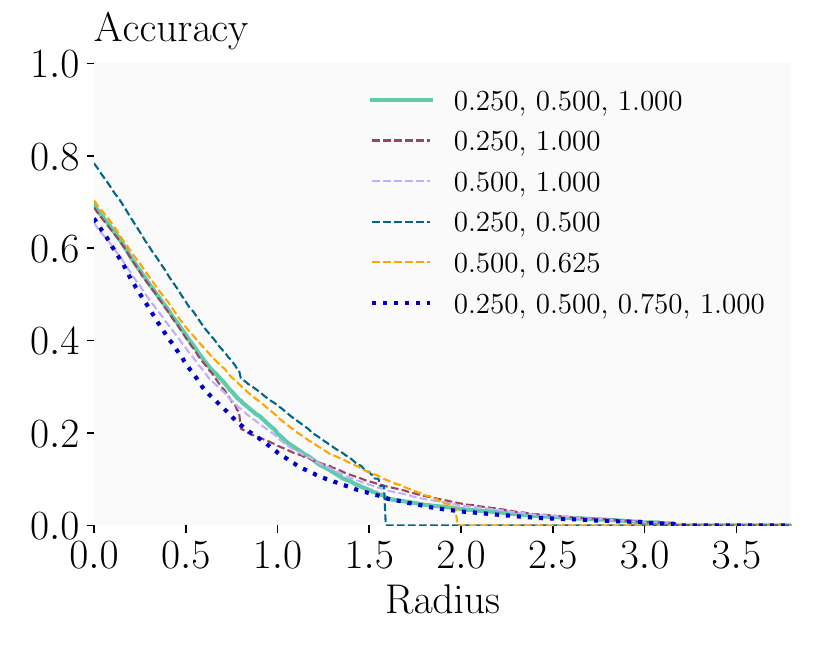}
    \caption{Certified accuracy $R_{\text{final}}$ with different $\sigma$ candidate sets.}
    \label{fig:abla_sigma_cand}
    \vspace{-2mm}
\end{figure*}

\begin{table}[]
    \caption{Certified accuracy (\%) at different radii with different $\sigma$ candidate sets. The denoiser and classifier are fixed (off-the-shelf), and only the variance estimator is trained. The best performance at each radius is highlighted in \textbf{bold}, and the worst and second worst are \textcolor{gray}{grayed}.} 
    \label{tab:abla_sigma_cand}
    \centering
    \resizebox{\linewidth}{!}{
    \begin{tabular}{cccccccccccc}
        \toprule
        $\sigma$ candidates set  & 0.00 & 0.25 & 0.50 & 0.75 & 1.00 & 1.25 & 1.50 & 1.75 & 2.00 & 2.25 & 2.50  \\ \midrule 
        \{0.25, 0.5, 1.0\}       & 69.34 & 55.25 & 41.28 & 29.01 & 19.85 & \textcolor{gray}{12.73} & \textcolor{gray}{7.62} & 4.73 & 3.54 & 2.62 & 1.83 \\
        \{0.25, 1.0\}            & 68.80 & 54.88 & 40.64 & \textcolor{gray}{26.86} & \textcolor{gray}{17.22} & 13.26 & 9.52 & 6.82 & \textbf{4.70} & \textbf{3.24} & \textbf{2.10} \\ 
        \{0.5, 1.0\}             & \textcolor{gray}{65.58} & \textcolor{gray}{51.95} & \textcolor{gray}{38.39} & 27.02 & 18.88 & 13.06 & 8.77 & 6.09 & 4.30 & 3.04 & 2.03  \\ 
        \{0.25, 0.5\}            & \textbf{78.37} & \textbf{63.52} & \textbf{48.42} & \textbf{35.61} & \textbf{25.90} & \textbf{18.41} & \textbf{11.60} & \textcolor{gray}{0.00} & \textcolor{gray}{0.00} & \textcolor{gray}{0.00} & \textcolor{gray}{0.00}  \\ 
        \{0.5, 0.625\}           & 70.29 & 56.66 & 42.90 & 32.04 & 23.49 & 16.42 & 11.48 & \textbf{7.40} & \textcolor{gray}{0.00} & \textcolor{gray}{0.00} & \textcolor{gray}{0.00}  \\ 
        \{0.25, 0.5, 0.75, 1.0\} & \textcolor{gray}{66.44} & \textcolor{gray}{50.17} & \textcolor{gray}{35.03} & \textcolor{gray}{23.55} & \textcolor{gray}{15.73} & \textcolor{gray}{10.27} & \textcolor{gray}{6.99} & \textcolor{gray}{4.54} & \textcolor{gray}{3.01} & \textcolor{gray}{2.13} &
        \textcolor{gray}{1.47}   \\ \bottomrule
    \end{tabular}
    }
\end{table}

\begin{figure*}[]
    \centering
    \begin{subfigure}[t]{.45\linewidth}
        \centering
        \includegraphics[width=.95\linewidth]{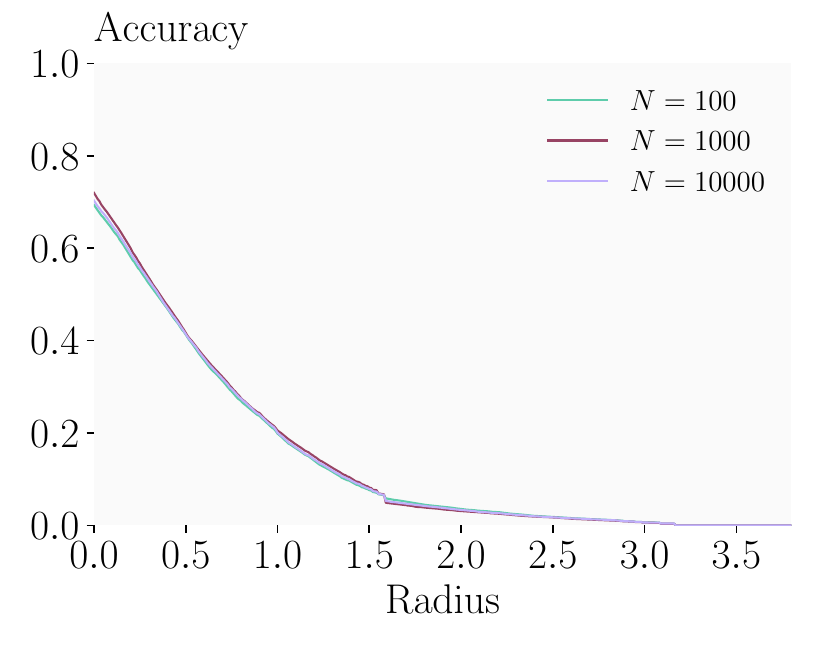}
        \caption{}
        \label{subfig:abl_map}
    \end{subfigure}
    \begin{subfigure}[t]{.45\linewidth}
        \centering
        \includegraphics[width=.95\linewidth]{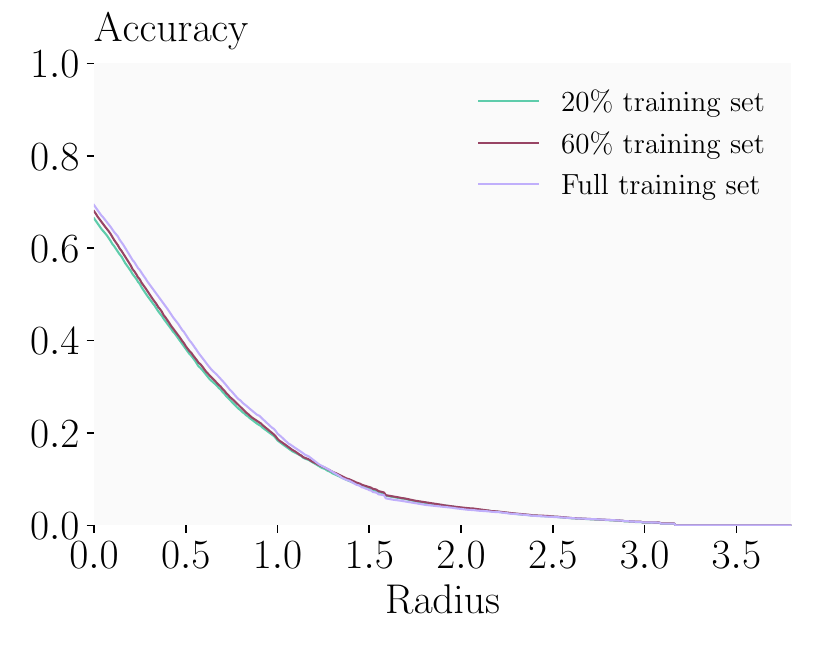}
        \caption{}
        \label{subfig:abl_subset}
    \end{subfigure}
    \caption{Study on reducing the training set construction cost. (a) Accuracy $R_{\text{final}}$ curves with different number of samples $N$ when calculating the certified radius for each $\sigma$ candidate. (b) Accuracy $R_{\text{final}}$ curves with different portion of training data used to train the variance estimator.}
    \label{fig:abla_reduce_train}
    \vspace{-2mm}    
\end{figure*}

\begin{figure*}[]
    \centering
    \includegraphics[width=.45\linewidth]{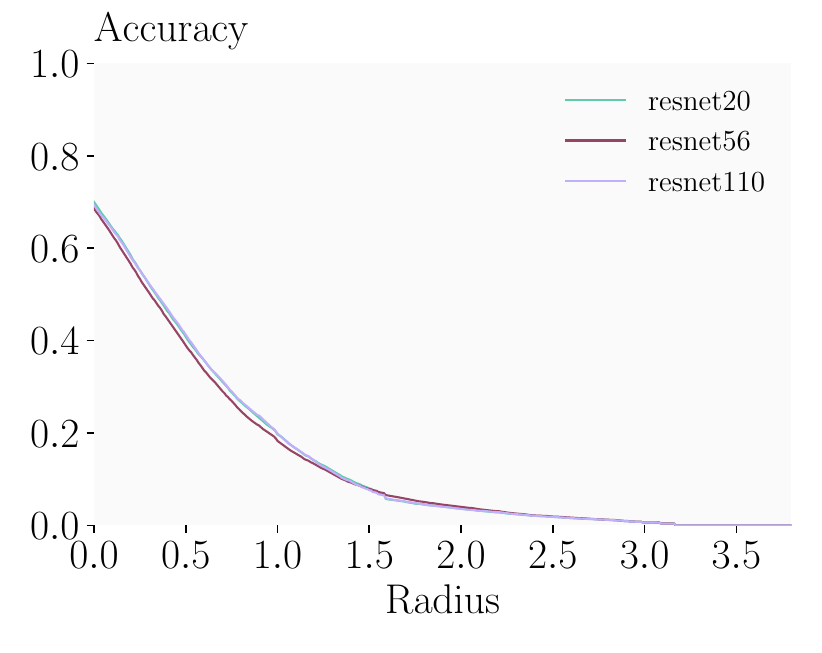}
    \caption{Accuracy $R_{\text{final}}$ curves with different $\sigma$ estimator architectures.}
    \label{fig:abla_smaller}
    \vspace{-4mm}
\end{figure*}

We employ the off-the-shelf diffusion denoiser and classifier in this study. \cref{fig:abla_lbd} illustrates the effect of $\lambda$ in the consistency loss. As $\lambda$ increases, the accuracy of variance estimation decreases and fewer samples are constrained by $R_{\sigma}$. Beyond $\lambda > 40.0$, the impact of further increases becomes negligible. With a moderate value (e.g., $\lambda = 40.0$), dual RS achieves strong performance in the medium-radius region, while incurring a slight performance drop in the small-radius region.

\subsection{Iterative Training} \label{app:finetune-rounds}

\cref{fig:abla_ft} shows training the variance estimator and finetune the classifier for different number of times. Finetuning the classifier with the estimated variances significantly improves the performance of dual RS across all radii. Moreover, re-training the variance estimator after classifier finetuning yields additional gains, particularly in the medium-radius region. \cref{fig:abla_lbd_ft} reports the Accuracy–$R_{\text{final}}$ curves after \emph{1 Estimator Training + 1 Classifier Finetuning} with different values of $\lambda$ in the consistency loss. Compared with \cref{subfig:abl_abla_loss}, which uses the off-the-shelf classifier, the influence of $\lambda$ becomes larger after finetuning. This occurs because larger $\lambda$ induces larger $R_\sigma$, thereby constraining fewer samples after finetuning, which amplifies the performance gap across different $\lambda$ values.

\subsection{Choice of $\sigma$ Candidates} \label{app:sigma-candidates}

Dual RS requires selecting a set of candidate noise variances. We conduct an ablation study to study how this choice affects certified accuracy on \cifar. In this ablation study, we use the off-the-shelf denoiser and classifier, and train the variance estimator only. In addition to the original candidate set $\{0.25, 0.5, 1.0\}$, we evaluate five alternative candidate sets: $\{0.25, 0.5\}$, $\{0.5, 0.625\}$, $\{0.25, 1.0\}$, $\{0.5, 1.0\}$, and $\{0.25, 0.5, 0.75, 1.0\}$. For each set, we use the maximum of the candidates as the estimator's global noise level, $\sigma_e$. \cref{fig:abla_sigma_cand} shows the accuracy - $R_{\text{final}}$ curves and \cref{tab:abla_sigma_cand} presents the numerical results.

Compared with $\{0.25, 0.5, 1.0\}$, using only two candidates ($\{0.25, 0.5\}$, $\{0.5, 0.625\}$, $\{0.25, 1.0\}$, or $\{0.5, 1.0\}$) leads to performance degradation at radii unfavored by the candidate set, but improving the performance at radii favored by the candidate set. Specifically, the candidate set $\{0.25, 0.5\}$ and $\{0.5, 0.625\}$ cannot achieve non-trivial accuracy at radii larger than 2.0, but achieve stronger performance at radii smaller than 2.0. The candidate set $\{0.25, 1.0\}$ leads to reduced accuracy in the medium-radii range (radii from 0.75 to 1.00), but improves at large radii (radii larger than 1.00).

 Increasing the number of candidates, e.g., using $\{0.25, 0.5, 0.75, 1.0\}$, does not improve performance over $\{0.25, 0.5, 1.0\}$, potentially due to the increased difficulty of accurately estimating the optimal $\sigma$ and obtaining a sufficiently large certified radius for the estimated $\sigma$.

\subsection{Reducing Training Cost} \label{app:reduce-training-cost}

To avoid the high cost of building training dataset, we adopt $N=100$ budget to estimate the certified radius for each $\sigma$ candidate in the main experiments. We further explore the effect of $N$ in this section. Moreover, we also explore the effect of using only a subset of the training data to train the variance estimator, which further reduces the training cost. All experiments here use off-the-shelf denoiser and classifier, on the CIFAR-10 dataset.

To see if a more accurate estimation on optimal certified radius brings performance gains, we additionally explore $N=1000$ and $N=10,000$. This increases the dataset construction cost by 10x and 100x, respectively. As shown in \cref{subfig:abl_map}, decreasing $N$ has minimal effect on performance, demonstrating that a relatively small $N$ is sufficient for training the variance estimator.

We also study whether the variance estimator can be trained only on a subset of the training data. Specifically, we randomly sample $\{20\%, 60\%\}$ of the training data and train the variance estimator solely based on the sampled training subset, thereby cutting down the training cost respectively. As shown in \cref{subfig:abl_subset}, using a significantly smaller training set for the variance estimator has minimal effect on the performance. This shows that a relatively small portion of the training data is sufficient to train a high-quality variance estimator.

In summary, both strategies substantially reduce the training cost while maintaining estimator performance. Therefore, in practice, we suggest to start with a small subset of the training data and estimate the radius based on small $N$, then progressively grow the size and the estimation quality until the performance gain diminishes.

\subsection{Architecture of the Variance Estimator} \label{app:smaller-estimator}

In the main CIFAR-10 experiments, we used a ResNet-110 estimator, which is a standard choice in training-based RS. We additionally evaluate smaller architectures (ResNet-20 and ResNet-56) while keeping the denoiser and classifier fixed. As shown in \cref{fig:abla_smaller}, using a smaller variance estimator has minimal effect on the accuracy - $R_{\text{final}}$ curves. This indicates that even though smaller estimators have lower representational capacity, they remain sufficiently expressive to approximate locally constant variance in practice.

\end{document}